\def\eqref#1{equation~\ref{#1}}
\def\1{\bm{1}}
\DeclareMathAlphabet{\mathsfit}{\encodingdefault}{\sfdefault}{m}{sl}
\SetMathAlphabet{\mathsfit}{bold}{\encodingdefault}{\sfdefault}{bx}{n}
\title{Toward Faithful Retrieval-Augmented Generation with Sparse Autoencoders}
\author{Guangzhi Xiong, Zhenghao He, Bohan Liu, Sanchit Sinha, Aidong Zhang\\
Department of Computer Science\\
University of Virginia \\
\texttt{\{guangzhi,zhenghao,bohan,sanchit,aidong\}@virginia.edu}
}
\definecolor{lightred}{rgb}{1, 0.8, 0.8}
\definecolor{lightblue}{rgb}{0.8, 0.9, 1}
\newtheorem{theorem}{Theorem}
\newcolumntype{P}[1]{>{\raggedright\arraybackslash}p{#1}}
\newcommand{\tokp}[2]{%
    \setlength{\fboxsep}{1pt}%
  \ifdim #1pt>0pt
    \begingroup
      \colorbox{red!#1}{\strut #2}%
    \endgroup
  \else
      #2%
  \fi
}
\newcommand{\spanp}[2]{%
  \begingroup
    \setlength{\fboxsep}{0pt}%
    \ifdim #1pt>0pt
      \kern-0.05em\colorbox{red!#1}{\strut #2}\kern-0.05em
    \else
      #2%
    \fi
  \endgroup
}
\newtcolorbox{AIbox}[2][]{aibox,title=#2,#1}
\begin{document}

\maketitle

\begin{abstract}
Retrieval-Augmented Generation (RAG) improves the factuality of large language models (LLMs) by grounding outputs in retrieved evidence, but faithfulness failures, where generations contradict or extend beyond the provided sources, remain a critical challenge. Existing hallucination detection methods for RAG often rely either on large-scale detector training, which requires substantial annotated data, or on querying external LLM judges, which leads to high inference costs. Although some approaches attempt to leverage internal representations of LLMs for hallucination detection, their accuracy remains limited. Motivated by recent advances in mechanistic interpretability, we employ sparse autoencoders (SAEs) to disentangle internal activations, successfully identifying features that are specifically triggered during RAG hallucinations. Building on a systematic pipeline of information-based feature selection and additive feature modeling, we introduce RAGLens, a lightweight hallucination detector that accurately flags unfaithful RAG outputs using LLM internal representations. RAGLens not only achieves superior detection performance compared to existing methods, but also provides interpretable rationales for its decisions, enabling effective post-hoc mitigation of unfaithful RAG. Finally, we justify our design choices and reveal new insights into the distribution of hallucination-related signals within LLMs. The code is available at \url{https://github.com/Teddy-XiongGZ/RAGLens}.
\end{abstract}

\section{Introduction}

Retrieval-Augmented Generation (RAG) has emerged as a promising paradigm for improving the factuality of large language models (LLMs) \citep{lewis2020retrieval}. By conditioning generation on passages retrieved from external corpora, RAG systems aim to ground model outputs in verifiable evidence. However, in practice, grounding does not eliminate unfaithfulness \citep{magesh2025hallucination,gao2023retrieval}. Models may still contradict the retrieved content, introduce unsupported details, or extrapolate beyond what the evidence justifies \citep{maynez2020faithfulness,rahman2025hallucination}. These faithfulness failures, commonly referred to as hallucinations in the RAG setting, undermine user trust and limit deployment in domains where faithfulness to source information is critical \citep{huang2025survey,zakka2024almanac}.

Various approaches have been proposed to address this challenge. One direction is to fine-tune specialized detectors to distinguish faithful from unfaithful generations \citep{hhem-2.1-open,tang2024minicheck}. While this method provides direct supervision, its effectiveness is often constrained by the need for large amounts of high-quality annotated training data. Another line of work employs LLMs as judges, where an auxiliary LLM is prompted to assess faithfulness given the retrieved passages and generated answers \citep{zheng2023judging,li2024llms}. However, these approaches struggle to detect hallucinations produced by the same model and introduce significant computational overhead when relying on large-scale external LLMs. More recently, researchers have explored the use of the LLM's internal representations, such as hidden states or attention scores, to capture hallucinations directly \citep{han2024semantic,sun2025redeep,zhou2025hademif}. While these methods show promise, the extraction of reliable hallucination-related signals remains challenging, and detection performance is often insufficient for practical deployment.

Meanwhile, recent advances in mechanistic interpretability have shown that sparse autoencoders (SAEs) can disentangle specific, semantically meaningful features from the hidden states of LLMs \citep{huben2023sparse}. By enforcing sparsity, SAEs identify features that correspond to concrete concepts, as evidenced by their consistent activation across similar cases \citep{bricken2023towards,shu2025survey}. This property, known as monosemanticity, provides a transparent link between internal activations and model behaviors. 
While recent work has explored the use of SAEs to detect signals associated with generic LLM hallucinations \citep{ferrando2025do,suresh2025noise,abdaljalil2025safe,tillman2025investigating,xin2025sparse}, hallucinations in RAG settings pose unique challenges due to the complex interplay between retrieved evidence and generated content. It remains unclear whether SAE features can effectively capture these dynamics.
In this work, we directly investigate whether SAEs can identify interpretable features that are predictive of hallucinations in RAG, enabling both accurate detection and deeper insight into failure cases.

We present RAGLens, a lightweight SAE-based detector that flags unfaithful RAG outputs by leveraging LLM internal activations through a systematic pipeline of information-based feature selection and additive feature modeling. Experimental results show that RAGLens identifies features highly relevant to RAG hallucinations and achieves superior detection performance compared to existing methods when evaluated on the same LLM. We further demonstrate the interpretability of RAGLens, enabled by its additive model structure and transparent input features, and highlight how these interpretations facilitate effective post-hoc mitigation of unfaithfulness. Finally, our analyses examine the design choices underlying RAGLens, revealing that mid-layer SAE features with high mutual information about the labels are most informative for detecting RAG hallucinations, and that generalized additive models (GAMs) are particularly well-suited for mapping SAE features to hallucination predictions.
To our knowledge, RAGLens is the first approach to systematically demonstrate the effectiveness of SAE features for detecting hallucinations in RAG, and to comprehensively investigate design principles for building accurate and interpretable detectors.
Here is a summary of our contributions:
\begin{itemize}
    \item We demonstrate that SAEs capture nuanced features specifically activated during RAG hallucinations, establishing a strong foundation for detecting RAG unfaithfulness from LLM internal representations.
    \item Building on these SAE features, we introduce RAGLens, a lightweight hallucination detector that outperforms existing methods in detection accuracy while providing transparent and interpretable feedback to aid in hallucination mitigation.
    \item Through detailed analyses, we justify the key design choices in RAGLens and offer new insights into the distribution of hallucination-related signals within LLMs.
\end{itemize}
\section{Related Work}

Retrieval-Augmented Generation (RAG) integrates retrieval modules with large language models (LLMs) to ground responses in external knowledge sources \citep{lewis2020retrieval,guu2020retrieval}. This design has improved factual accuracy in tasks such as open-domain question answering, knowledge-intensive dialogue, and domain-specific search \citep{shuster2021retrieval,siriwardhana2023improving,xiong2024benchmarking,oche2025systematic,wei2025instructrag}. However, RAG systems remain vulnerable to faithfulness errors: even when relevant passages are retrieved, models may contradict evidence, invent unsupported details, or extrapolate beyond the source \citep{niu2024ragtruth,sun2025redeep}. These failures have been studied under terms such as hallucination, ungrounded generation, or source inconsistency, and are increasingly recognized as a central obstacle to deploying RAG in real-world applications \citep{zhang2025siren,gao2023retrieval,elchafei2025hallucination}.

To address this challenge, a growing body of work has developed detectors to judge whether a generated response is faithful to the retrieved evidence \citep{manakul2023selfcheckgpt,sriramanan2024llm}. Early approaches focused on fine-tuning specialized detectors, which can be effective but require large amounts of high-quality training data, particularly when adapting large models \citep{hhem-2.1-open,tang2024minicheck}. With the rise of foundation models, researchers have begun using LLMs as evaluators, prompting an auxiliary LLM to compare generated answers against source passages and determine whether hallucination occurs \citep{zheng2023judging,bui2024two}. However, this often necessitates the use of larger LLMs, leading to high computational costs, sensitivity to prompt design \citep{wang2024large}, and explanations that may be plausible but do not faithfully reflect the underlying decision process \citep{turpin2023language}. More recent studies have explored leveraging LLM internal representations for hallucination detection, but challenges such as the polysemanticity of neurons and the opacity of hidden states have limited the extraction of high-quality features, resulting in insufficient detection performance \citep{elchafei2025hallucination,sun2025redeep}.

Recent research has shown that sparse autoencoders (SAEs) can expose semantically meaningful features within the hidden representations of LLMs \citep{huben2023sparse,shu2025survey}. By constraining activations through a sparsity-inducing bottleneck, SAEs learn dictionaries of features that often correspond to human-interpretable concepts such as syntactic roles, entities, or factual attributes \citep{bricken2023towards,gujral2025sparse}. This capability has facilitated analysis of model internals, enhanced interpretability, and even enabled targeted control of generative behavior \citep{shi2025route}. The interpretability of SAE-derived features makes them attractive for tasks where transparency is critical, such as hallucination detection.

\section{RAGLens: Faithful Retrieval-Augmented Generation via Sparse Representation Probing}

\subsection{Problem Setting}
Following prior work \citep{niu2024ragtruth,song2024rag,sun2025redeep}, we use ``RAG'' to denote any context-conditioned generation process in which an LLM produces an answer based on both a user query/instruction and a provided context. The faithfulness detection task is to determine whether the generated answer is consistent with the given context.
In such tasks, each annotated instance consists of: (1) a user query or instruction \(q\); (2) a set of retrieved passages \(\mathcal{C}\); (3) an answer sequence \(y_{1:T}\) generated by an LLM, where \(T\) is the sequence length and \(y_{1:t}\) denotes the prefix up to position \(t\); and (4) a binary label \(\ell \in \{0,1\}\) indicating whether the answer contains hallucination relative to \(\mathcal{C}\).
We assume access to a frozen LLM \(\Phi\) and a corresponding SAE with encoder \(\mathcal{E}\) trained on the hidden states in the \(L\)-th layer of \(\Phi\). We denote by \(\Phi_L(\cdot)\) the mapping that returns layer-\(L\) hidden states. Given a generation \(y_{1:T}\), we obtain
\begin{equation}\label{eq:hidden-states}
h_t \;=\; \Phi_L(y_{1:t}, q, \mathcal{C}), \qquad t=1,\ldots,T,
\end{equation}
and transform these via the SAE encoder into sparse features
\begin{equation}
z_t \;=\; \mathcal{E}(h_t), \qquad z_t \in \mathbb{R}^K,
\end{equation}
where \(K\) is the size of the dictionary and only a small number of features are active at each position.

Our goal is to examine whether the SAE features contain signals that help detect hallucinations related to RAG.
Section \ref{sec:detection} presents our detection method, and Section \ref{sec:explanation} shows how the results support explanation and mitigation.
An overview is shown in Figure \ref{fig:pipeline}.

\begin{figure}[h!]
    \begin{center}
    \includegraphics[width=1\linewidth]{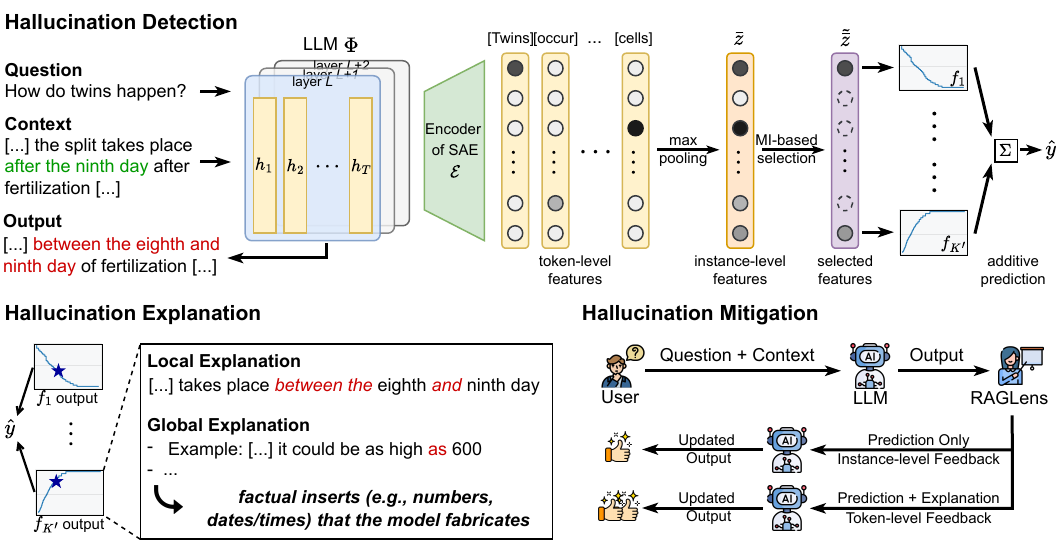}
    \end{center}
    \vspace{-0.05in}
    \caption{Overview of RAGLens for detecting, explaining, and mitigating hallucinations in retrieval-augmented generation using interpretable sparse features.}
    \vspace{-0.05in}
    \label{fig:pipeline}
\end{figure}

\subsection{Hallucination Detection}\label{sec:detection}

\paragraph{Instance-level Feature Summary.}
Because target labels are instance-level, we summarize token-level activations into an instance representation via channel-wise max pooling:
\begin{equation}
\bar{z}_k \;=\; \max_{1\le t\le T} z_{t,k}, \qquad k=1,\ldots,K,
\end{equation}
where $z_{t, k}$ is the $k$-th element of $z_t \in \mathbb{R}^K$ and collect \(\bar{\mathbf{z}} = (\bar{z}_1,\ldots,\bar{z}_K) \in \mathbb{R}^K\).

\paragraph{Information-based Feature Selection.}
We quantify the information of each pooled feature \(\bar{z}_k\) ($k = 1, \cdots, K$) about the hallucination label \(\ell\) using mutual information (MI):
\begin{equation}
I(\bar{z}_k;\,\ell) 
\;=\; \int_{\mathbb{R}} \sum_{\ell \in \{0,1\}} 
p(\bar{z}_k,\ell)\,\log_2 \frac{p(\bar{z}_k,\ell)}{p(\bar{z}_k)\,p(\ell)} \, d\bar{z}_k.
\end{equation}
We rank features by MI and select the top \(K'\) dimensions:
\begin{equation}
\mathcal{S} \;=\; \operatorname*{arg\,max}_{|\mathcal{S}|=K'} \;\sum_{k \in \mathcal{S}} I(\bar{z}_k;\,\ell),
\end{equation}
yielding \(\tilde{\bar{\mathbf{z}}} \in \mathbb{R}^{K'}\) as the subvector of \(\bar{\mathbf{z}}\) restricted to indices \(\mathcal{S}\).
In our experiments, MI is estimated with a binning-based method applied to the pooled activations.
While we do not explicitly utilize the hidden states of the retrieved passages \(\mathcal{C}\), our encoding of the model output \(y\) in Equation \ref{eq:hidden-states} is conditioned on \(\mathcal{C}\). This allows the SAE features to implicitly capture interactions between the generated answer and the retrieved content. Empirical results in Appendix \ref{app:case_study} show that the selected SAE features encode knowledge relevant to the retrieved passages, and their activations are dynamically influenced by counterfactual interventions on \(\mathcal{C}\).

\paragraph{Transparent Prediction with Generalized Additive Models.}
After selecting informative SAE features, we model the instance label from the pooled representation using a generalized additive model (GAM) \citep{lou2012intelligible,caruana2015intelligible,hastie2017generalized}:
\begin{equation}
g(\mathbb{E}[\ell \mid \tilde{\bar{\mathbf{z}}}]) \;=\; \beta_0 + \sum_{j=1}^{K'} f_j(\tilde{\bar{z}}_{j}),
\end{equation}
where \(g\) is the link function (e.g., logit for binary classification) and each univariate shape function \(f_j\) is learned using bagged gradient boosting \citep{nori2019interpretml}. 
By selecting only $K'$ features ($K' \ll K$), the fitted GAM serves as a lightweight detector, requiring the encoding of only a small subset of SAE features. Our analysis in Section \ref{sec:discussions} further validates that GAM is well-suited for modeling hallucination signals from SAE features, outperforming more complex predictors such as MLP \citep{popescu2009multilayer} and XGBoost \citep{chen2016xgboost}.

\paragraph{Justification of Max Pooling on Sparse Activations.}
Beyond the practical advantage of storage efficiency, we provide a theoretical justification for using max pooling: in the sparse activation regime, it can help distinguish hallucination-related features from random noise by amplifying signals associated with relevant targets.
To facilitate the analysis, for any fixed feature index \(k\), suppressing the respective notation for clarity, we model the token-level activation \(z_{t} \ge 0\) as conditionally independent across tokens given the label \(\ell \in \{0, 1\}\), with a rare activation mechanism:
\begin{equation}
z_t \;=\;
\begin{cases}
0, & \text{with probability } 1-p_\ell,\\
V_t, & \text{with probability } p_\ell,
\end{cases}
\qquad t=1,\ldots,T,
\end{equation}
where the “active-value” random variable \(V_t\) has a distribution \(F\) supported on \((0,\infty)\) that is independent of \(\ell\) and i.i.d.\ across tokens. Let \(\bar z=\max_{1\le t\le T} z_t\) and \(\pi=\Pr(\ell=1)\).

\begin{theorem}[Max pooling in the sparse-activation regime]\label{thm:maxpool-sparse}
If \(T\times \bar{p} \ll 1\) with \(\bar p=\tfrac{1}{2}(p_1+p_0)\), then
\begin{equation}
I(\bar z;\ell)
\;=\;
\frac{\pi(1-\pi)}{2\ln 2}\;\frac{T\,(\Delta p)^2}{\bar p}
\;+\; O\!\big((T\bar p)^2\big),
\qquad
\Delta p=p_1-p_0,
\end{equation}
where \(I(\bar z;\ell)>0\) iff \(p_1\neq p_0\). The leading dependence is linear in \(T\) and quadratic in \(\Delta p\).
\end{theorem}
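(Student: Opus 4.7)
The plan is to reduce the computation of $I(\bar z;\ell)$ to a binary ``did any activation fire'' problem, because in the sparse regime the \emph{magnitude} of the max carries essentially no additional label information once one knows that the feature fired at least once.

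First I would introduce the indicator $B:=\mathbf{1}\{\bar z>0\}$ and apply the chain rule
\begin{equation}
I(\bar z;\ell) \;=\; I(B;\ell)\;+\;\Pr(B=1)\,I(\bar z;\ell\mid B=1),
\end{equation}
using that $\bar z\equiv 0$ on $\{B=0\}$ so that $I(\bar z;\ell\mid B=0)=0$. I would then treat the two terms separately.

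For the conditional term, let $N=\#\{t:z_t>0\}\sim\mathrm{Binomial}(T,p_\ell)$; given $N=n\ge 1$, $\bar z$ is the maximum of $n$ i.i.d.\ draws from $F$, whose law is independent of $\ell$. A short expansion gives $\Pr(N=1\mid B=1,\ell)=1-\frac{(T-1)p_\ell}{2}+O((T\bar p)^2)$, so the conditional law of $\bar z\mid B=1,\ell$ equals $F$ up to a signed perturbation of total mass $O(Tp_\ell)$ whose only $\ell$-dependence enters through $p_\ell$. A chi-squared second-order bound on conditional mutual information then yields $\Pr(B=1)\,I(\bar z;\ell\mid B=1)=O((T\bar p)^2)$, with implicit constant depending only on $\pi$ and a mild moment condition on $F$.

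For the main term, $I(B;\ell)$ is exact in closed form. Writing $q_\ell:=1-(1-p_\ell)^T$, $\bar q:=\pi q_1+(1-\pi)q_0$, and $h$ for the base-$2$ binary entropy,
\begin{equation}
I(B;\ell)\;=\;h(\bar q)-\pi h(q_1)-(1-\pi)h(q_0).
\end{equation}
I would Taylor-expand $h$ about $\bar q$: the linear term cancels since $\pi q_1+(1-\pi)q_0=\bar q$, and $h''(x)=-1/(x(1-x)\ln 2)$ produces
\begin{equation}
I(B;\ell)\;=\;\frac{\pi(1-\pi)(q_1-q_0)^2}{2\bar q(1-\bar q)\ln 2}\;+\;R,
\end{equation}
with $R$ controlled by a Lagrange bound of order $(q_\ell-\bar q)^3/\bar q^2$. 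Finally I would substitute $q_\ell=Tp_\ell-\binom{T}{2}p_\ell^2+O((T\bar p)^3)$ and $\bar q=T\bar p+O((T\bar p)^2)$, obtaining $(q_1-q_0)^2/\bar q=T(\Delta p)^2/\bar p\cdot(1+O(T\bar p))$ and $1-\bar q=1+O(T\bar p)$; combined with the elementary bound $(\Delta p)^2\le 4\bar p^2$ (which pins the main term at $O(T\bar p)$), every correction becomes $O((T\bar p)^2)$. The sign claim $I(\bar z;\ell)>0\Leftrightarrow p_1\ne p_0$ follows from strict concavity of $h$ together with strict monotonicity of $x\mapsto 1-(1-x)^T$, so $q_1=q_0\Leftrightarrow p_1=p_0$.

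\textbf{Main obstacle.} The delicate part is matching three independent error sources---the cubic Taylor remainder $R$, the second-order correction in replacing $q_\ell$ by $Tp_\ell$, and the conditional-MI leakage from $N\ge 2$---all at the same $O((T\bar p)^2)$ order. Making this uniform in the ratio $\Delta p/\bar p$ is what forces the use of $(\Delta p)^2\le 4\bar p^2$, so that each relative error of order $T\bar p$ compounds to an absolute error of order $(T\bar p)^2$ and is dominated by the leading term.
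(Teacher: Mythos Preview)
Your proposal follows essentially the same route as the paper: reduce to the activation indicator $B=\mathbf{1}\{\bar z>0\}$ via the chain rule, Taylor-expand the binary entropy to extract the leading term of $I(B;\ell)$, and bound the conditional piece $I(\bar z;\ell\mid B{=}1)$ by arguing that, given $B{=}1$, the law of $\bar z$ depends on $\ell$ only through the activation count $N$, which is $1$ with probability $1-O(T\bar p)$ in the sparse regime. The paper uses a Jensen--Shannon/total-variation inequality where you invoke a $\chi^2$-type bound; both deliver the same $O((T\bar p)^2)$ control on the conditional term.

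There is one slip that affects the leading constant. You center the Taylor expansion at the \emph{weighted} mean $\bar q=\pi q_1+(1-\pi)q_0$ and then substitute $\bar q=T\bar p+O((T\bar p)^2)$. But for this choice of $\bar q$ one actually has
\[
\bar q \;=\; T\bigl[\pi p_1+(1-\pi)p_0\bigr]+O((T\bar p)^2)
\;=\; T\bar p + T\bigl(\pi-\tfrac12\bigr)\Delta p + O((T\bar p)^2),
\]
and the middle term is $O(T\bar p)$, not $O((T\bar p)^2)$, unless $\pi=\tfrac12$. Carried through, your quadratic formula would produce $T(\Delta p)^2/[\pi p_1+(1-\pi)p_0]$ in the leading term rather than $T(\Delta p)^2/\bar p$ as stated in the theorem. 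The paper sidesteps this by centering the expansion at the \emph{unweighted} midpoint $\tfrac12(q_0+q_1)$, for which the substitution $\bar q=T\bar p+O((T\bar p)^2)$ is genuinely valid. Re-center there and the rest of your argument goes through unchanged.
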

\vspace{-0.1in}
\begin{proof}[Proof sketch]
Let \(A=\mathbf{1}\{\bar z>0\}\). Independence across tokens implies \(\Pr(A{=}1\mid \ell)=q_\ell=1-(1-p_\ell)^T\), so \(I(A;\ell)=h(\pi q_1+(1-\pi)q_0)-[\pi h(q_1)+(1-\pi)h(q_0)]\). For \(p_\ell\ll 1\), \(q_\ell\approx T p_\ell\) and a second-order expansion of \(h\) gives
\begin{equation}
I(A;\ell)=\frac{\pi(1-\pi)}{2\ln 2}\;\frac{T\,(\Delta p)^2}{\bar p}+O((T\bar p)^2).
\end{equation}
Since \(A\) is a deterministic function of \(\bar z\), \(I(A;\ell)\le I(\bar z;\ell)\). In the single-hit regime (\(T\bar p\ll 1\)), the extra information in \(\bar z\) beyond \(A\) occurs only on rare multi-activation events, contributing \(O((T\bar p)^2)\). Combining yields the claim. A full proof is in Appendix \ref{app:proofs}.
\end{proof}

\subsection{Hallucination Explanation and Mitigation}\label{sec:explanation}

Since the detection results are computed from SAE features within an additive modeling framework, our approach naturally supports interpretability at both the local (instance-specific) and global (instance-invariant) levels, which can then be leveraged to improve faithful RAG generation.

\noindent\textbf{Local Explanations via Sparse Feature Attribution.}
Because our GAM operates additively on a small set of selected SAE features, each prediction can be decomposed into a sum of feature contributions. For any given example, we can attribute the hallucination prediction to the specific sparse features that are most strongly activated. By aligning these activations with token positions, we obtain token-level feedback that highlights which parts of the generation are likely ungrounded relative to the retrieved passages. This fine-grained attribution enables users to directly pinpoint fabricated factual inserts such as numbers, dates, or named entities.

\noindent\textbf{Global Explanations via Intrinsic Model Interpretability.} Beyond instance-specific attributions, RAGLens also provides global, instance-invariant explanations. With the dictionary learning property of SAEs, each SAE feature corresponds to a semantically coherent concept that can be summarized as human-understandable knowledge. Furthermore, our use of GAMs in RAGLens enables visualization of the learned shape function for each feature, offering a stable explanation of the mapping from feature magnitude to predicted hallucination risk. Practitioners can therefore inspect how changes in a given feature systematically increase or decrease the prediction, enabling consistent feature-level auditing.

\noindent\textbf{Mitigation through Multi-level Feedback.}
The interpretability of our framework enables explanation signals to be directly incorporated into mitigation strategies at inference time. Specifically, detection results can be provided to LLMs as instance-level warnings, prompting the model to reconsider and revise potentially hallucinated content. By aligning sparse activations with text spans identified by local explanations, we can further highlight problematic tokens that may require editing, thereby guiding the model to refine its output.
\section{Experiments}

\subsection{Experimental Settings}

We conduct experiments on two RAG hallucination benchmarks with Llama2 backbones: RAGTruth \citep{niu2024ragtruth} and Dolly (Accurate Context) \citep{hu2024refchecker}, both of which include human annotations for outputs generated by Llama2-7B/13B. To further evaluate generalizability across architectures, we also test our method on Llama3.2-1B, Llama3.1-8B, and Qwen3-0.6B/4B using two additional datasets, AggreFact \citep{tang2023understanding} and TofuEval \citep{tang2024tofueval}, which contain hallucinations produced by a variety of LLMs. For consistency with prior work \citep{sun2025redeep,tamber2025benchmarking}, we report balanced accuracy (Acc) and macro F$_1$ (F$_1$).

We compare RAGLens with representative detectors based on (i) prompt engineering \citep[e.g.,][]{friel2023chainpoll}, (ii) model uncertainty \citep[e.g.,][]{manakul2023selfcheckgpt}, or (iii) LLM internal representations \citep[e.g.,][]{sun2025redeep}. We also include a fine-tuning baseline, ``Llama2-13B(LR)'', following existing work \citep{sun2025redeep}. For a fair comparison with methods that analyze internal signals during LLM generation, we evaluate all detectors on Llama2-7B and Llama2-13B using the corresponding samples generated by these models in RAGTruth and Dolly. Table \ref{tab:performance_comparison} lists the specific baselines, with details in Appendix \ref{sec:baseline}. Implementation details are provided in Appendix \ref{sec:implementation}.

\subsection{Performance on RAG Hallucination Detection}

Table \ref{tab:performance_comparison} summarizes the performance of RAGLens on RAGTruth and Dolly, with comparisons to previous methods using the same backbones. As the table shows, the SAE features of both Llama2-7B and Llama2-13B contain sufficient information to accurately detect hallucinations, achieving AUC scores greater than 80\% on both datasets. More importantly, RAGLens consistently outperforms existing baselines, demonstrating its effectiveness in identifying and leveraging internal knowledge for RAG hallucination detection. These results highlight the strong potential of SAEs to serve as powerful detectors by using knowledge already embedded within LLMs to identify hallucinations.

\begin{table}[h!]
\vspace{-0.1in}
\caption{Performance comparison of different hallucination detection methods on RAGTruth and Dolly. The best results are highlighted in \textbf{bold}.}  
\label{tab:performance_comparison}
\begin{center}
\resizebox{\textwidth}{!}{
\begin{tabular}{l|ccc|ccc|ccc|ccc}
\toprule
\multirow{2.5}{*}{Method} 
& \multicolumn{3}{c|}{RAGTruth (Llama2-7B)} 
& \multicolumn{3}{c|}{Dolly (Llama2-7B)} 
& \multicolumn{3}{c|}{RAGTruth (Llama2-13B)} 
& \multicolumn{3}{c}{Dolly (Llama2-13B)} \\
\cmidrule(lr){2-4} \cmidrule(lr){5-7} \cmidrule(lr){8-10} \cmidrule(lr){11-13}
 & AUC & Acc & F$_1$ & AUC & Acc & F$_1$ & AUC & Acc & F$_1$ & AUC & Acc & F$_1$ \\
\midrule
Prompt       & -- & 0.6700 & 0.6720 & -- & 0.6200 & 0.5476 & -- & 0.7300 & 0.6899 & -- & 0.6700 & 0.5823 \\
Llama2-13B(LR) & -- & 0.6350 & 0.6572 & -- & 0.6043 & 0.6616 & -- & 0.7044 & 0.6725 & -- & 0.5545 & 0.6664 \\
LwMLM        & -- & 0.6940 & 0.7365 & -- & 0.6550 & 0.7702 & -- & 0.5956 & 0.7684 & -- & 0.6800 & 0.7000 \\
FAcTScore & 0.5428 & 0.5333 & 0.6719 & 0.4813 & 0.5354 & 0.6849 & 0.5294 & 0.4533 & 0.6239 & 0.4389 & 0.4646 & 0.5954 \\
FactCC & 0.4976 & 0.5022 & 0.4589 & 0.6169 & 0.5758 & 0.5882 & 0.4753 & 0.4800 & 0.4121 & 0.6496 & 0.6162 & 0.5250 \\
ChainPoll    & 0.6738 & 0.6841 & 0.7006 & 0.6593 & 0.6200 & 0.5581 & 0.7414 & 0.7378 & 0.7370 & 0.7070 & 0.6800 & 0.6004 \\
RAGAS        & 0.7290 & 0.6822 & 0.6667 & 0.6648 & 0.6560 & 0.6392 & 0.7541 & 0.7080 & 0.6987 & 0.6412 & 0.6480 & 0.5306 \\
TurLens      & 0.6510 & 0.6821 & 0.6658 & 0.6264 & 0.6800 & 0.6567 & 0.7073 & 0.6756 & 0.7063 & 0.6622 & 0.5700 & 0.3944 \\
RefCheck     & 0.6912 & 0.6467 & 0.6736 & 0.6494 & 0.6100 & 0.5412 & 0.7897 & 0.7200 & 0.7823 & 0.6621 & 0.5700 & 0.3944 \\
P(True)      & 0.7093 & 0.5648 & 0.6549 & 0.6191 & 0.5344 & 0.5095 & 0.8496 & 0.6266 & 0.7038 & 0.6422 & 0.5260 & 0.5240 \\
SelfCheckGPT & -- & 0.5844 & 0.4642 & -- & 0.5300 & 0.3188 & -- & 0.5844 & 0.4642 & -- & 0.5300 & 0.3188 \\
LN-Entropy   & 0.5912 & 0.5620 & 0.6850 & 0.6074 & 0.5656 & 0.6261 & 0.5912 & 0.5620 & 0.6850 & 0.6074 & 0.5656 & 0.6261 \\
Energy       & 0.5619 & 0.5088 & 0.6657 & 0.6074 & 0.5656 & 0.6261 & 0.5619 & 0.5088 & 0.6657 & 0.6074 & 0.5656 & 0.6261 \\
Focus        & 0.6233 & 0.5533 & 0.6522 & 0.6783 & 0.6212 & 0.6545 & 0.7888 & 0.6000 & 0.6758 & 0.7067 & 0.6500 & 0.6567 \\
Perplexity   & 0.5091 & 0.5333 & 0.6749 & 0.6825 & 0.6363 & 0.7097 & 0.5091 & 0.5333 & 0.6749 & 0.6825 & 0.6363 & 0.7097 \\
EigenScore   & 0.6045 & 0.5422 & 0.6682 & 0.6786 & 0.6596 & 0.7241 & 0.6640 & 0.5267 & 0.6637 & 0.7214 & 0.6211 & 0.7200 \\
SEP          & 0.7143 & 0.6187 & 0.7048 & 0.6067 & 0.6060 & 0.7023 & 0.8098 & 0.7288 & 0.7799 & 0.7093 & 0.6800 & 0.6923 \\
SAPLMA       & 0.7107 & 0.5155 & 0.6502 & 0.6500 & 0.6084 & 0.6653 & 0.8029 & 0.5488 & 0.6923 & 0.7088 & 0.6100 & 0.6605 \\
ITI          & 0.6714 & 0.5667 & 0.6496 & 0.5494 & 0.5800 & 0.6281 & 0.8501 & 0.6177 & 0.6850 & 0.6530 & 0.5583 & 0.6712 \\
ReDeEP       & 0.7458 & 0.6822 & 0.7190 & 0.7949 & 0.7373 & 0.7833 & 0.8244 & 0.7889 & 0.7587 & 0.8420 & 0.7070 & 0.7603 \\
\textbf{RAGLens (Ours)} & \textbf{0.8413} & \textbf{0.7576} & \textbf{0.7636} & \textbf{0.8764} & \textbf{0.7778} & \textbf{0.8070} & \textbf{0.8964} & \textbf{0.8333} & \textbf{0.8148} & \textbf{0.8568} & \textbf{0.7576} & \textbf{0.7895} \\
\bottomrule
\end{tabular}
}
\end{center}
\vspace{-0.1in}
\end{table}

\subsection{Cross-model Application} \label{sec:generalization}

While SAE features are not transferable across different LLMs, the RAGLens detector trained on one LLM can be flexibly applied to text outputs generated by other LLMs. To examine whether LLMs contain sufficient internal knowledge to detect hallucinations produced by other LLMs, we conduct cross-model evaluations by training a series of RAGLens detectors based on SAEs of multiple open-source LLMs, and test their performance on RAG outputs from various LLMs in RAGTruth, AggreFact, and TofuEval.
Specifically, we prompt each LLM to assess the faithfulness of the RAG output in a chain-of-thought (CoT) style \citep{wei2022chain}, using the template from \citet{luo2023chatgpt}, and compare these results to those of the same model's SAE-based detector via RAGLens.

\begin{figure}[h!]
    \begin{center}
    \includegraphics[width=1\linewidth]{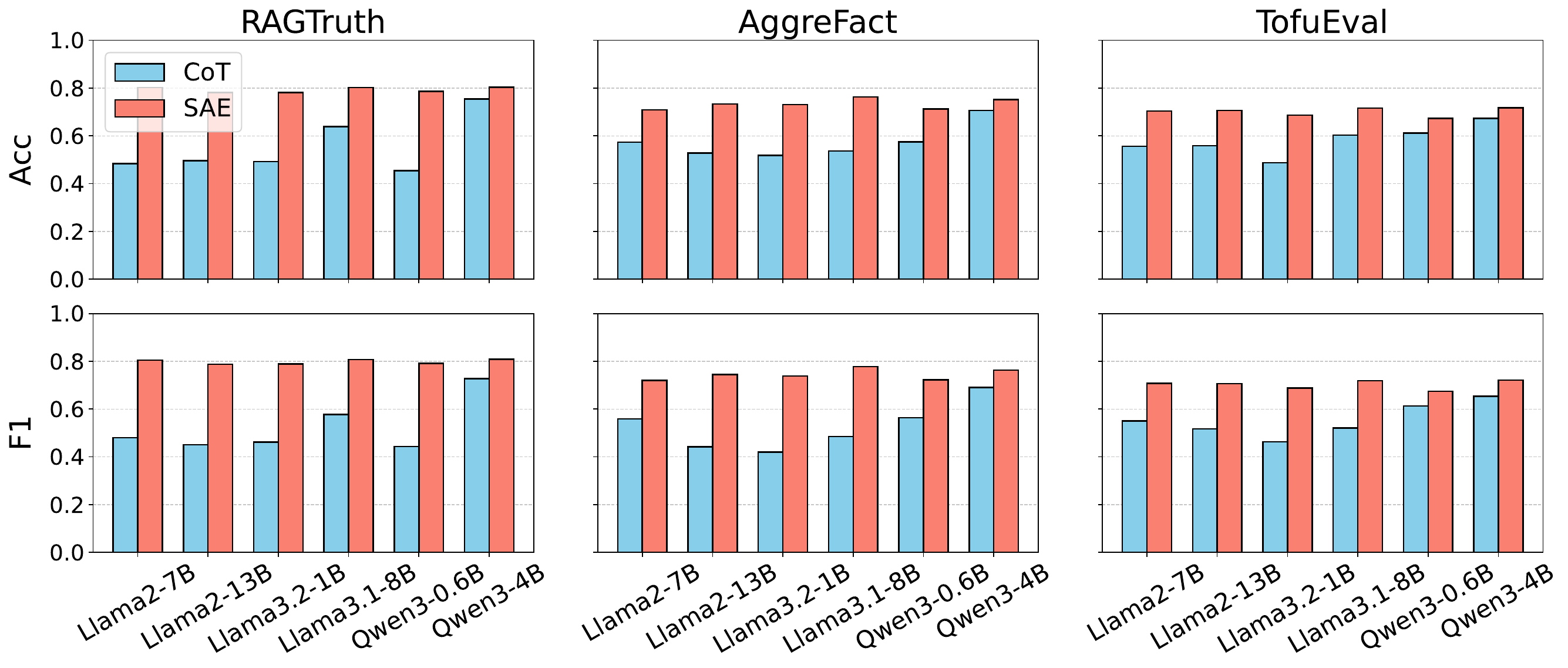}
    \end{center}
    \vspace{-0.05in}
    \caption{Comparison of LLM CoT-style self-judgment versus internal knowledge revealed by SAE features for hallucination detection across datasets.}
    \vspace{-0.05in}
    \label{fig:self_comparison}
\end{figure}

Figure \ref{fig:self_comparison} shows the performance of all evaluated LLMs across different datasets. The SAE-based detector consistently outperforms each model's own CoT-style self-judgments. Larger LLMs exhibit stronger internal knowledge, achieving higher detection performance with SAE-based detectors. While earlier generation models such as Llama2-7B and Llama2-13B have lower CoT judgments on certain datasets, their SAE-based detectors perform comparably to newer models of similar size (e.g., Llama3.1-8B). Meanwhile, although Qwen3-0.6B achieves competitive CoT performance on AggreFact and TofuEval, its SAE-based detector lags behind those of larger LLMs, suggesting that the informativeness of internal knowledge correlates more with model size than with training pipeline. Overall, these results indicate that models ``know more than they tell'' and that SAEs can reveal latent faithfulness signals that are not consistently captured by CoT reasoning.

\subsection{Generalization across Domains} \label{sec:domain_shift}

Beyond cross-model applications, we further assess whether the internal signals captured by RAGLens generalize across domains. Specifically, we train the RAGLens predictor on one domain and evaluate its performance on other domains. Table \ref{tab:domain_shift} reports the generalization performance (AUC) of RAGLens across different datasets and tasks.

The left part of Table \ref{tab:domain_shift} presents cross-dataset results, showing that RAGLens generalizability depends on the diversity of its training data. For example, a detector trained on RAGTruth significantly outperforms the CoT baseline on AggreFact and TofuEval without retraining. In contrast, predictors trained on AggreFact and TofuEval, while still outperforming CoT in most cases, do not generalize as well as those trained on RAGTruth. This can be attributed to dataset differences: RAGTruth covers multiple subtasks, whereas AggreFact and TofuEval focus on single tasks. The results indicate that RAGLens trained with diverse samples is more robust and generalizable to domain shifts.

\begin{table}[h!]
    \centering
    \vspace{-0.1in}
    \caption{Generalization across datasets and subtasks. Left: RAGLens trained on a single dataset and evaluated on others. Right: RAGLens trained on one RAGTruth subtask and evaluated on other subtasks. ``None'' indicates zero-shot performance with CoT prompting. All scores are AUROC.}
    \label{tab:domain_shift}
    \begin{center}
    \begin{minipage}{0.505\textwidth}\small
        \centering
        \resizebox{\textwidth}{!}{
        \begin{tabular}{lcccc} 
            \toprule
            Train$\backslash$Test & RAGTruth & AggreFact & TofuEval \\
            \midrule
            Llama2-7B\\
            \midrule
            None & 0.4842 & 0.5741 & 0.5562 \\
            RAGTruth & 0.8806 & 0.8019 & 0.7637 \\
            AggreFact & 0.5330 & 0.8330 & 0.6123 \\
            TofuEval & 0.7747 & 0.6161 & 0.7846 \\
            \midrule
            Llama2-13B\\
            \midrule
            None & 0.4959 & 0.5285 & 0.5583 \\
            RAGTruth & 0.8674 & 0.7831 & 0.7319 \\
            AggreFact & 0.4669 & 0.8285 & 0.6239 \\
            TofuEval & 0.7342 & 0.5727 & 0.7883 \\
            \bottomrule
        \end{tabular}
        }
    \end{minipage}
    \hfill
    \begin{minipage}{0.46\textwidth}\small
        \centering
        \resizebox{\textwidth}{!}{
        \begin{tabular}{lcccc}
            \toprule
            Train$\backslash$Test & Summary & QA & Data2txt \\
            \midrule
            Llama2-7B\\
            \midrule
            None & 0.4924 & 0.4845 & 0.4949 \\
            Summary & 0.8191 & 0.8253 & 0.6443 \\
            QA & 0.7081 & 0.8835 & 0.6609 \\
            Data2txt & 0.5386 & 0.6616 & 0.8454 \\
            \midrule
            Llama2-13B\\
            \midrule
            None & 0.5196 & 0.5088 & 0.4765 \\
            Summary & 0.7539 & 0.8330 & 0.6627 \\
            QA & 0.6619 & 0.8769 & 0.6669 \\
            Data2txt & 0.5653 & 0.7373 & 0.8491 \\
            \bottomrule
        \end{tabular}
        }
    \end{minipage}
    \end{center}
\vspace{-0.1in}
\end{table}

Generalization across task types is shown in the right part of Table \ref{tab:domain_shift}. RAGLens, when trained on one task, consistently transfers its learned knowledge to other tasks and outperforms the CoT baseline. Among the three subtasks, the predictor trained on summarization (Summary) exhibits the strongest generalizability, surpassing those trained on question answering (QA) or data-to-text generation (Data2txt). Additionally, knowledge transfer between Summary and QA is more effective than between Data2txt and the other tasks. These results suggest that RAGLens can capture common signals shared across different RAG tasks, while also revealing the presence of task-specific signals that limit generalization.

\subsection{Interpretability of RAGLens}

In addition to strong performance, RAGLens provides interpretability for the hallucination detection process. Since RAGLens uses SAE features that are disentangled and correspond to specific concepts, we can analyze which features are most indicative of hallucinations and what they represent. With the deployment of the GAM classifier, RAGLens can transparently illustrate how each feature contributes to the final prediction through learned shape functions.
Table \ref{tab:interpret} presents two representative SAE features from Llama3.1-8B that are most predictive of hallucinations, as identified by the GAM classifier trained on RAGTruth. For each feature, we show two example activations from RAGTruth outputs, accompanied by a semantic explanation distilled by GPT-5 from 24 activation cases. We also visualize the learned shape function for each feature, where the y-axis (feature effect on hallucination prediction) is zero-centered, illustrating how the feature value (x-axis) influences the likelihood of hallucination.

\begin{table}[h!]
\vspace{-0.1in}
\caption{Interpretation of two SAE features in Llama3.1-8B. Each row shows the feature ID, a brief explanation of its semantic role, and example text spans where the feature is activated. Top activated tokens in each example are shown in \textbf{bold}, while the hallucinated tokens are highlighted in \sethlcolor{lightred}\hl{red}. The shape functions learned by the GAM are visualized in the rightmost column, illustrating each feature's impact on hallucination prediction.}
\label{tab:interpret}
\begin{center}
\begin{tabular}{llll}
\toprule
ID & Explanation & Examples & Shape Plot \\
\midrule
\multirow{4.5}{*}{22790} & \multirow{4.5}{*}{\makecell[l]{unsupported\\numeric/time\\specifics}}
& \makecell[lt]{
Context: no mention of age\\
Output: [...] at \sethlcolor{lightred}\hl{the age \textbf{of} 34} [...]} & \multirow{4.5}{*}{\adjustbox{valign=t}{\includegraphics[width=0.175\linewidth]{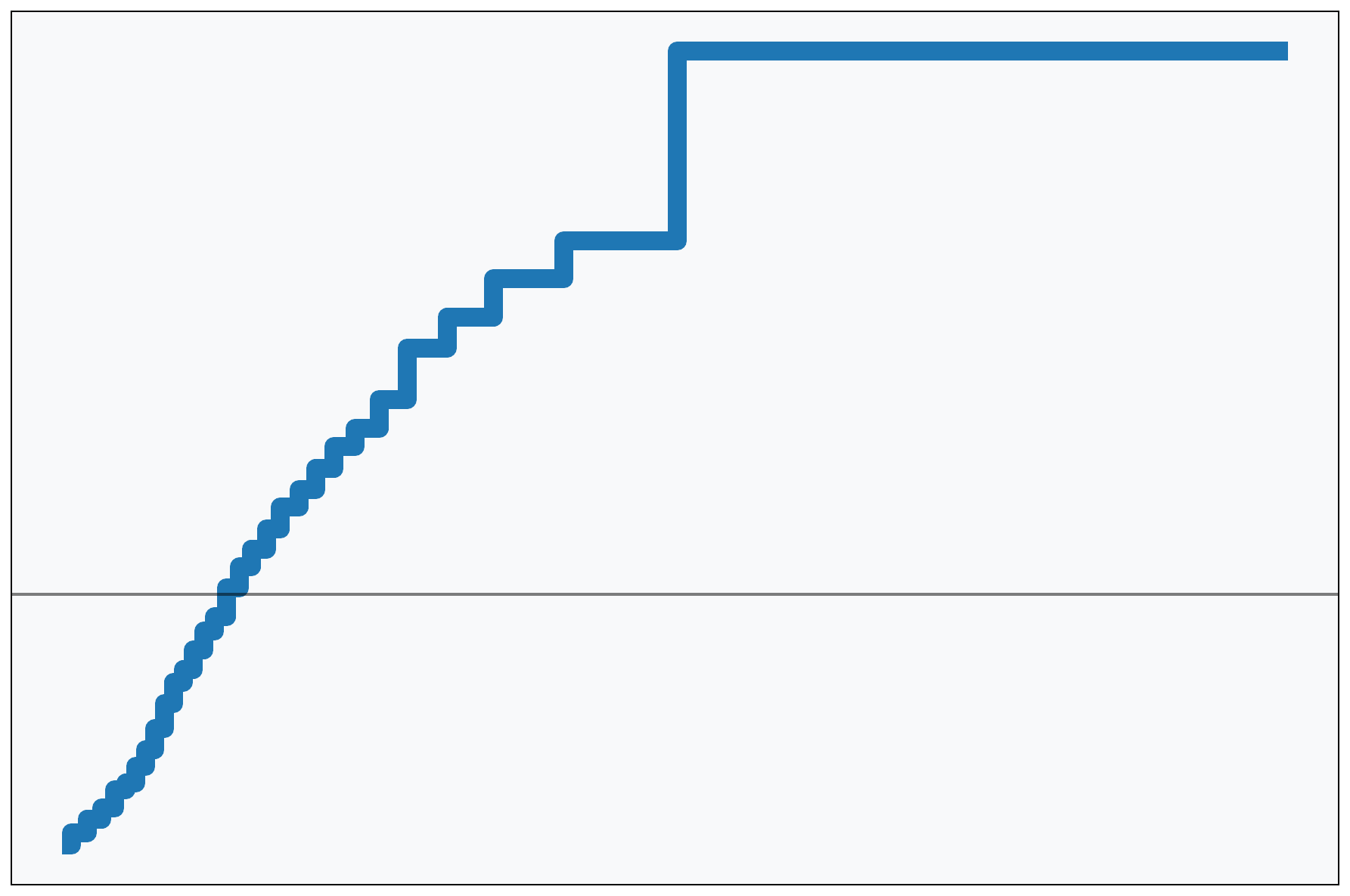}}} \\ 
\cmidrule{3-3}
& & \makecell[lt]{
Context: no mention of release schedule \\
Output: [...] \sethlcolor{lightred}\hl{to be \textbf{released in} August} [...]} & \\
\midrule
\multirow{4.5}{*}{17721} & \multirow{4.5}{*}{\makecell[l]{grounded,\\high-salience\\tokens}}
& \makecell[lt]{
Context: [...] could be arrested on the spot [...] \\
Output: [...] could be \textbf{arrested} on the spot [...] } & \multirow{4.5}{*}{\adjustbox{valign=t}{\includegraphics[width=0.175\linewidth]{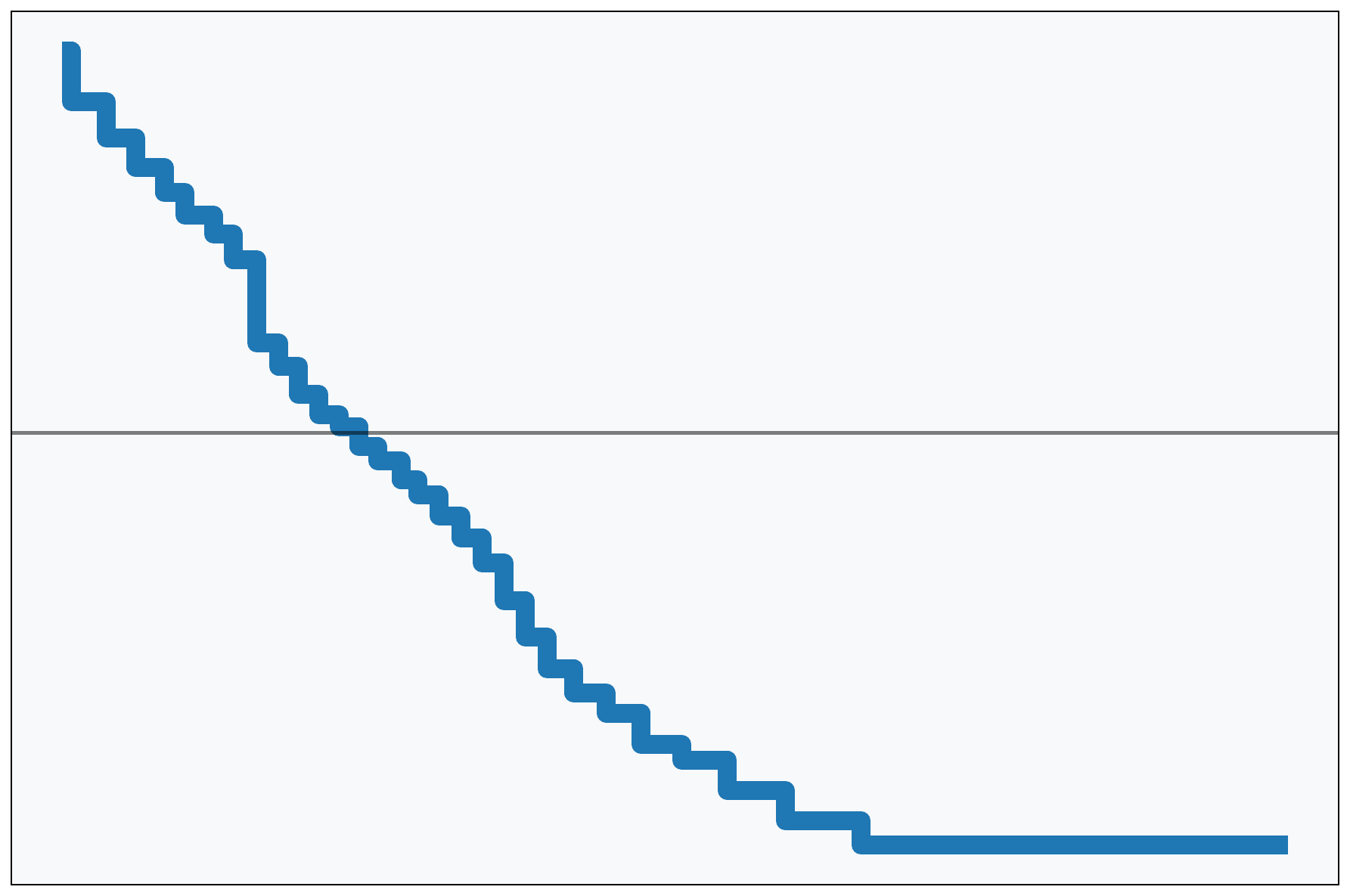}}} \\ 
\cmidrule{3-3}
& & \makecell[lt]{
Context: [...] software can be licensed as a [...] \\
Output: [...] software can be \textbf{licensed} as a [...]} & \\
\bottomrule
\end{tabular}
\end{center}
\vspace{-0.1in}
\end{table}

As shown in the table, Llama3.1-8B contains various types of features that help detect hallucinations from different perspectives. For example, feature 22790 indicates potential hallucinations that are related to unsupported numeric/time specifics. Its corresponding shape function (learned by GAM) exhibits a monotonic increase in hallucination likelihood as activation strength rises. RAGLens also uncovers SAE features that are negatively correlated with hallucinations, such as feature 17721, which captures signals associated with grounded, high-salience tokens. This interpretability not only clarifies how RAGLens works, but also provides insights into the internal knowledge of LLMs. Additional examples from other LLMs are provided in Appendix \ref{app:additional_features}, and Appendix \ref{app:case_study} presents case studies using counterfactual perturbations to validate that these features are specifically sensitive to hallucination patterns unique to RAG scenarios.

\subsection{Mitigation of Unfaithfulness with RAGLens} \label{sec:mitigation}

Leveraging its detection and interpretation capabilities, RAGLens can provide post-hoc feedback to LLMs to mitigate hallucinations. We evaluate this by applying Llama2-7B-based RAGLens to 450 Llama2-7B-generated outputs from RAGTruth, and prompting the same model to revise its original output using RAGLens feedback. Specifically, we compare the effectiveness of instance-level feedback (detection results only) and token-level feedback (which includes additional explanations from RAGLens interpretation) for hallucination mitigation.

Table \ref{tab:mitigation} reports the resulting hallucination rates (lower is better) as judged by multiple automatic LLM judges. In addition, two human annotators evaluated a subset of 45 outputs, with an inter-annotator agreement of 78.3\%. Although hallucination rates vary among different types of annotators, the results consistently show that both types of RAGLens feedback effectively reduce hallucinations in the revised output. Notably, the more nuanced token-level feedback enabled by RAGLens interpretability leads to further reductions compared to instance-level feedback. We further applied a trained RAGLens detector (Llama3.1-8B based) to all 450 examples and found that instance-level feedback converted 29 outputs from hallucination to non-hallucination, while token-level feedback achieved 36 such conversions, confirming the advantage of token-level feedback.

\begin{table}[h!]
\vspace{-0.1in}
\caption{Mitigation of Llama2-7B hallucinations using SAE-based internal knowledge. Hallucination rates (lower is better) are reported for original outputs and after applying instance- and token-level feedback, as judged by Llama3.3-70B, GPT-4o, GPT-o3, and human annotators.}
\label{tab:mitigation}
\begin{center}
\begin{tabular}{lcccc}
\toprule
     & Llama3.3-70B & GPT-4o & GPT-o3 & Human \\
\midrule
    Original & 43.78\% & 37.78\% & 64.44\% & 71.11\% \\
    + Instance-level Feedback & 42.22\% & 36.44\% & 60.44\% & 62.22\% \\
    + Token-level Feedback & \textbf{39.11\%} & \textbf{34.22\%} & \textbf{58.88\%} & \textbf{55.56\%} \\
\bottomrule
\end{tabular}
\end{center}
\vspace{-0.1in}
\end{table}
\section{Discussions} \label{sec:discussions}

Beyond the main results on hallucination detection, interpretation, and mitigation using SAE features, we further analyze several key SAE-specific design choices in RAGLens, including the selection of the LLM layer, the feature extractor, the number of selected features, and the predictor architecture.

\paragraph{LLM Layer Selection.}

We first vary the layer from which SAE features are extracted, covering the full depth of several LLMs (Llama3.2-1B, Llama3-8B, Qwen3-0.6B, and Qwen3-4B). Figure \ref{fig:layer_analaysis} presents the heatmaps of LLM performance on various subtasks in RAGTruth (RAGTruth-Summary, RAGTruth-QA, and RAGTruth-Data2txt), where layer depths are normalized for direct comparison. The results show that the performance trend in layers is consistent among LLMs but varies by task. In the Summary and QA tasks of RAGTruth, the performance peaks around the middle layers, whereas the Data2txt task exhibits a comparatively flat performance pattern across layers.

\begin{figure}[h!]
    \begin{center}
    \begin{minipage}{0.325\linewidth}
        \centering
        \includegraphics[width=\linewidth]{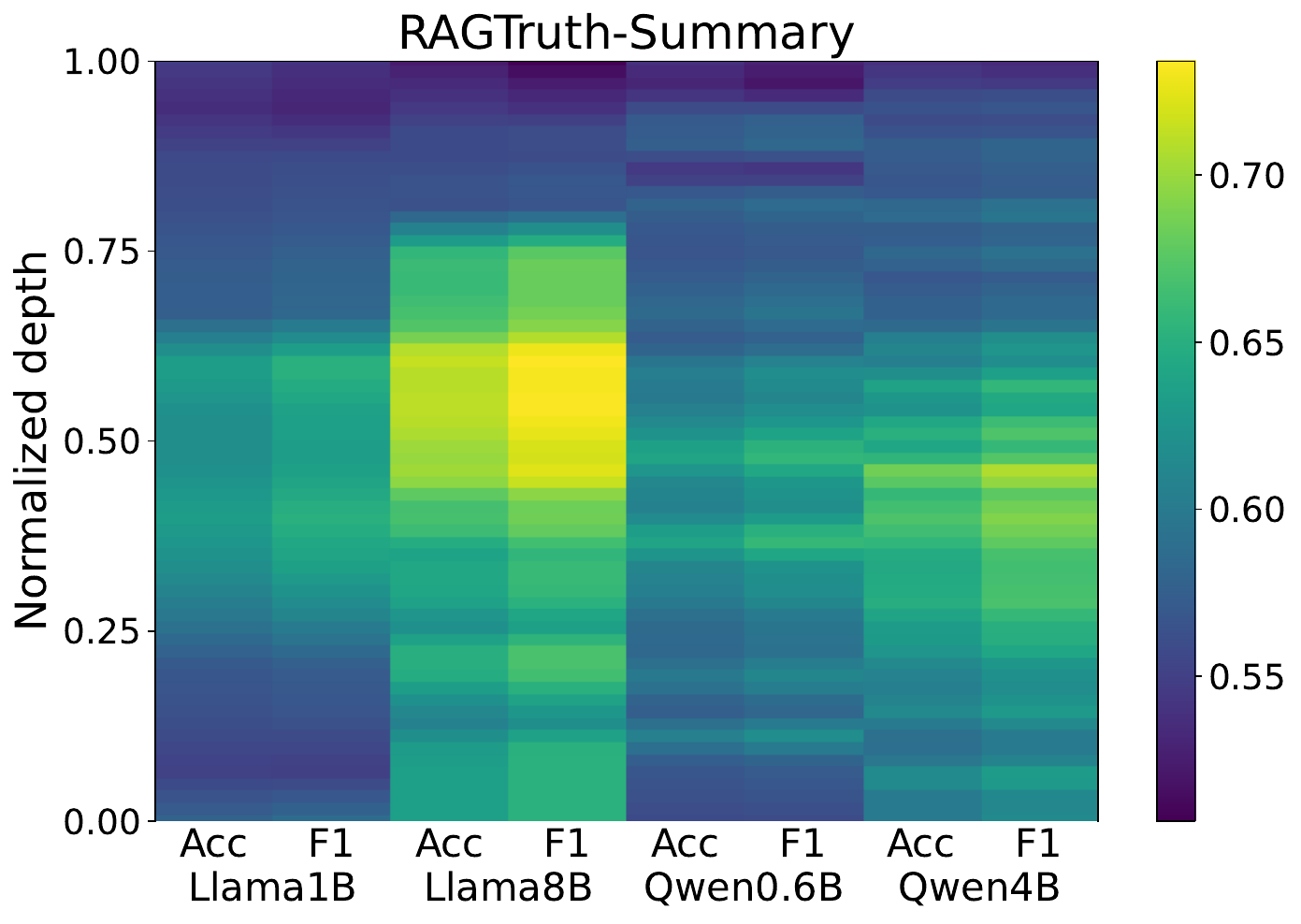}
    \end{minipage}
    \hfill
    \begin{minipage}{0.325\linewidth}
        \centering
        \includegraphics[width=\linewidth]{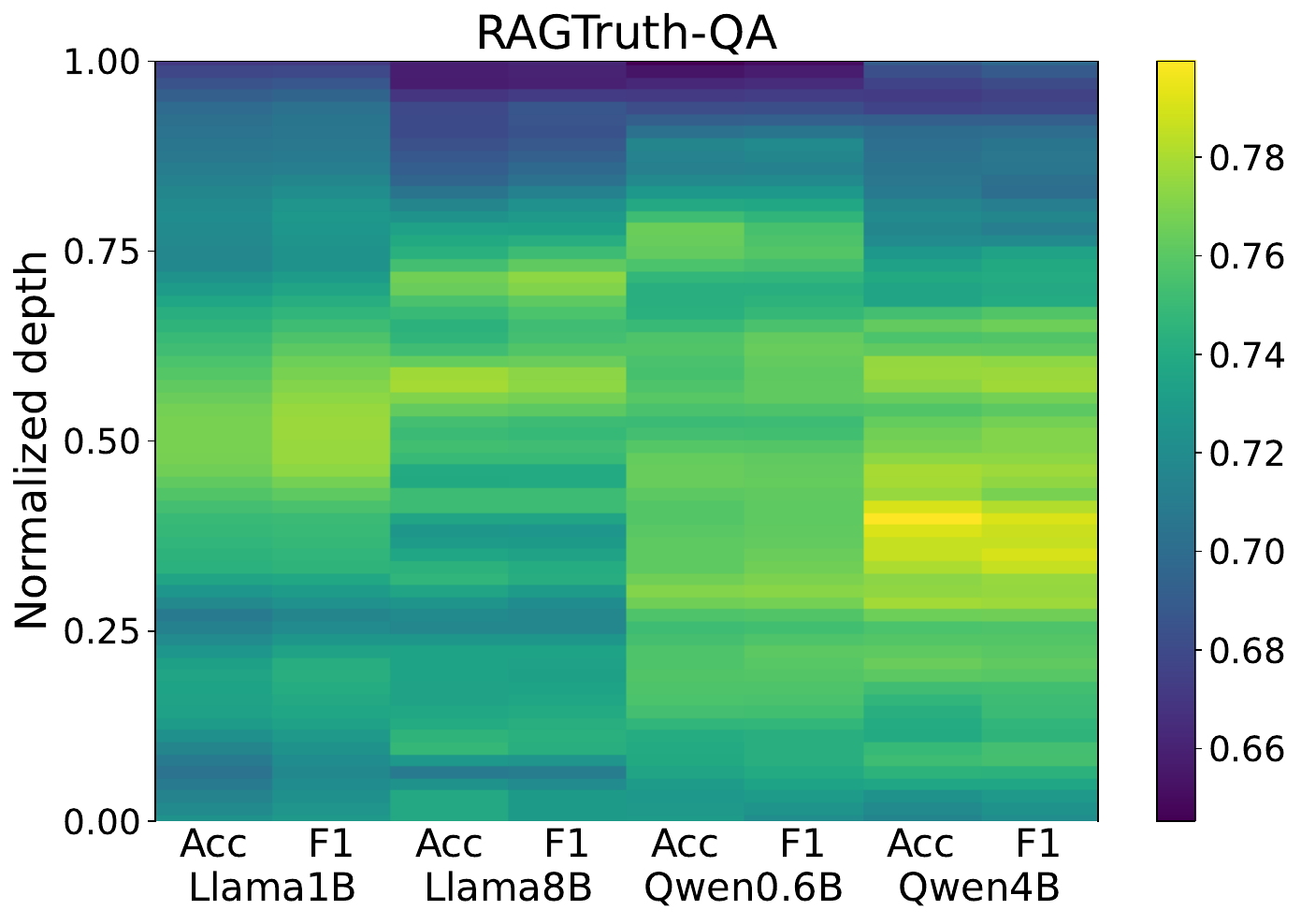}
    \end{minipage}
    \hfill
    \begin{minipage}{0.325\linewidth}
        \centering
        \includegraphics[width=\linewidth]{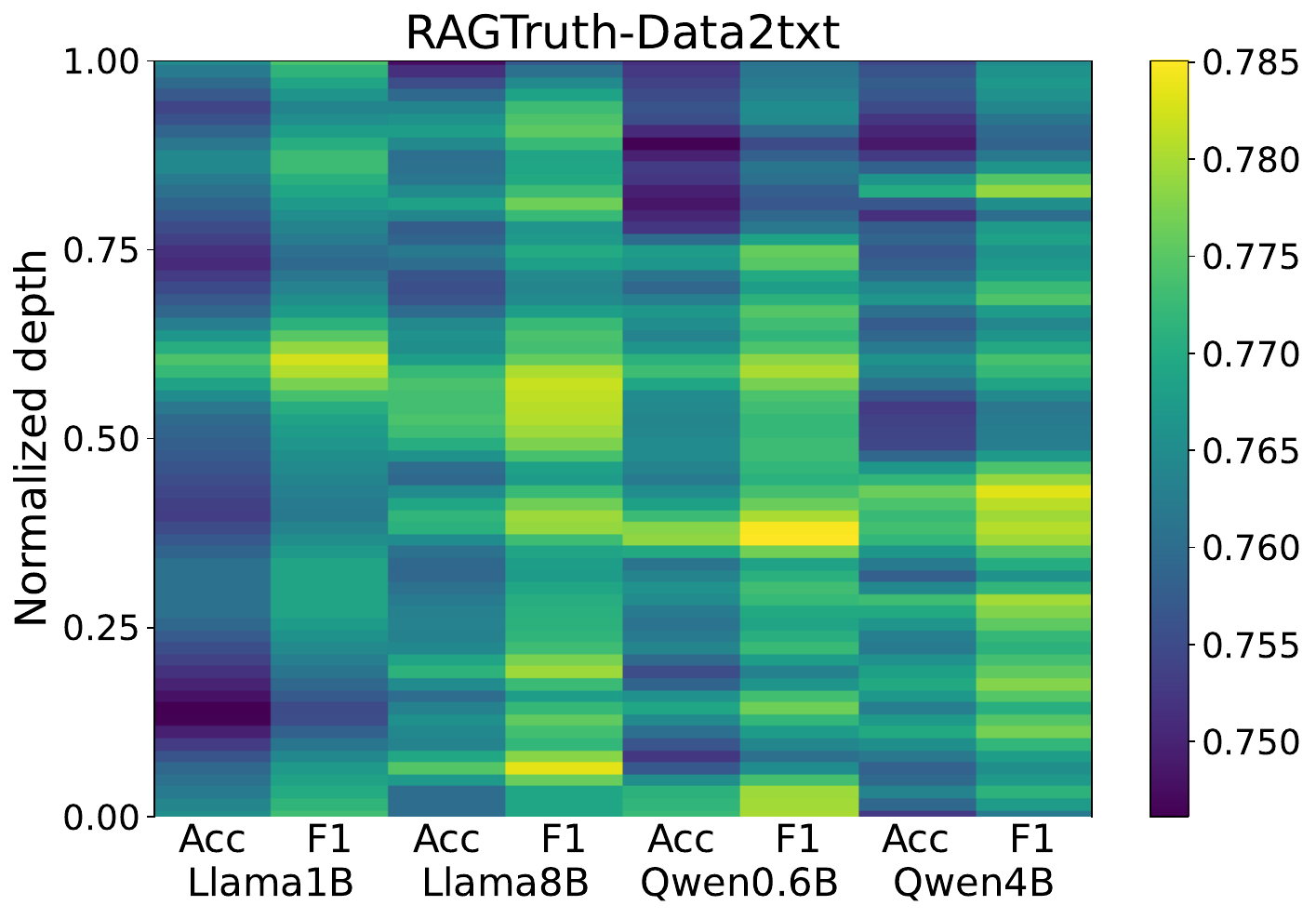}
    \end{minipage}
    \end{center}
    \vspace{-0.05in}
    \caption{Layer-wise analysis of Llama3.2-1B, Llama3-8B, Qwen3-0.6B, and Qwen3-4B on subtasks in RAGTruth (RAGTruth-Summary, RAGTruth-QA, and RAGTruth-Data2txt).}
    \vspace{-0.05in}
    \label{fig:layer_analaysis}
\end{figure}

\paragraph{Feature Extractor Comparison.}

We further compare different feature extractors, specifically SAE and Transcoder \citep{dunefsky2024transcoders}, as well as pre-activation versus post-activation signals (i.e., features extracted before or after applying the activation function). Table \ref{tab:input_source} shows that pre-activation features consistently outperform post-activation features for both extractors. Transcoder and SAE achieve similar accuracy, indicating no clear advantage for either architecture. These results suggest that while both extractors are effective, the choice of activation point is more critical, with pre-activations retaining more informative signals about RAG hallucinations.

\begin{table}[h!]\small
\vspace{-0.1in}
    \caption{Comparison of SAE and Transcoder feature extractors, using pre- and post-activation signals, for hallucination detection with Llama3.2-1B across three datasets.}
    \label{tab:input_source}
    \begin{center}
    \begin{tabular}{cccccccc}
    \toprule
          \multirow{2.5}{*}{Architecture} & \multirow{2.5}{*}{Activation} & \multicolumn{2}{c}{RAGTruth} & \multicolumn{2}{c}{AggreFact} & \multicolumn{2}{c}{TofuEval}\\
         \cmidrule(lr){3-4} \cmidrule(lr){5-6} \cmidrule(lr){7-8}
         &  & Acc & F$_1$ & Acc & F$_1$ & Acc & F$_1$ \\
    \midrule
        \multirow{2}{*}{SAE} & Pre-activation & 0.7810 & 0.7892 & 0.7308 & 0.7388 & 0.6865 & 0.6876 \\
         & Post-activation & 0.7606 & 0.7700 & 0.6939 & 0.7091 & 0.5637 & 0.5642 \\
    \midrule
        \multirow{2}{*}{Transcoder} & Pre-activation & 0.7778 & 0.7830  & 0.7468 & 0.7586 & 0.6652 & 0.6666 \\
         & Post-activation & 0.7594 & 0.7684 & 0.7373 & 0.7525 & 0.6195 & 0.6178 \\
    \bottomrule
    \end{tabular}
    \end{center}
\vspace{-0.1in}
\end{table}

\paragraph{Analysis of Feature Count.}

We also examine how varying the number of selected features ($K'$) affects performance, using mutual information (MI) ranking to identify the most informative features. Figure \ref{fig:k_selection} shows the performance of Llama2-7B-based RAGLens as $K'$ decreases from 1024 to 1, comparing MI-based selection to random selection (Rand.) starting with the same set of features. As expected, performance drops as fewer features are used, but the decline is much more gradual with MI-based selection, demonstrating that MI effectively prioritizes informative features for hallucination detection. Differences in trends between datasets further highlight the varying complexity of hallucination detection tasks.

\begin{figure}[h!]
    \begin{center}
    \includegraphics[width=1.0\linewidth]{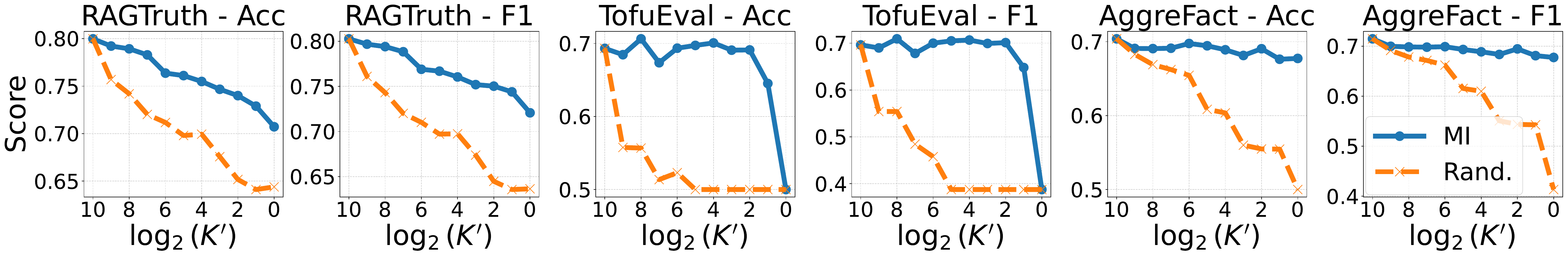}
    \end{center}
    \vspace{-0.05in}
    \caption{Effect of varying the number of selected features ($K'$) on hallucination detection performance, comparing mutual information (MI) ranking and random selection (Rand.).}
    \vspace{-0.05in}
    \label{fig:k_selection}
\end{figure}

\paragraph{Predictor Ablation.}

Lastly, we compare logistic regression (LR), generalized additive model (GAM), multilayer perceptron (MLP), and eXtreme Gradient Boosting (XGBoost) as predictors for hallucination detection using the selected SAE features. Figure \ref{fig:predictor_ablation} shows that GAM consistently outperforms LR and also surpasses more complex models such as MLP and XGBoost, despite its additive structure. This suggests that while the effect of individual features on the output is often nonlinear, the overall contribution of SAE features can be effectively captured in an additive manner. Consequently, GAM is particularly well-suited for leveraging SAE features, offering both strong performance and interpretability.

\begin{figure}[h!]
    \begin{center}
    \includegraphics[width=1\linewidth]{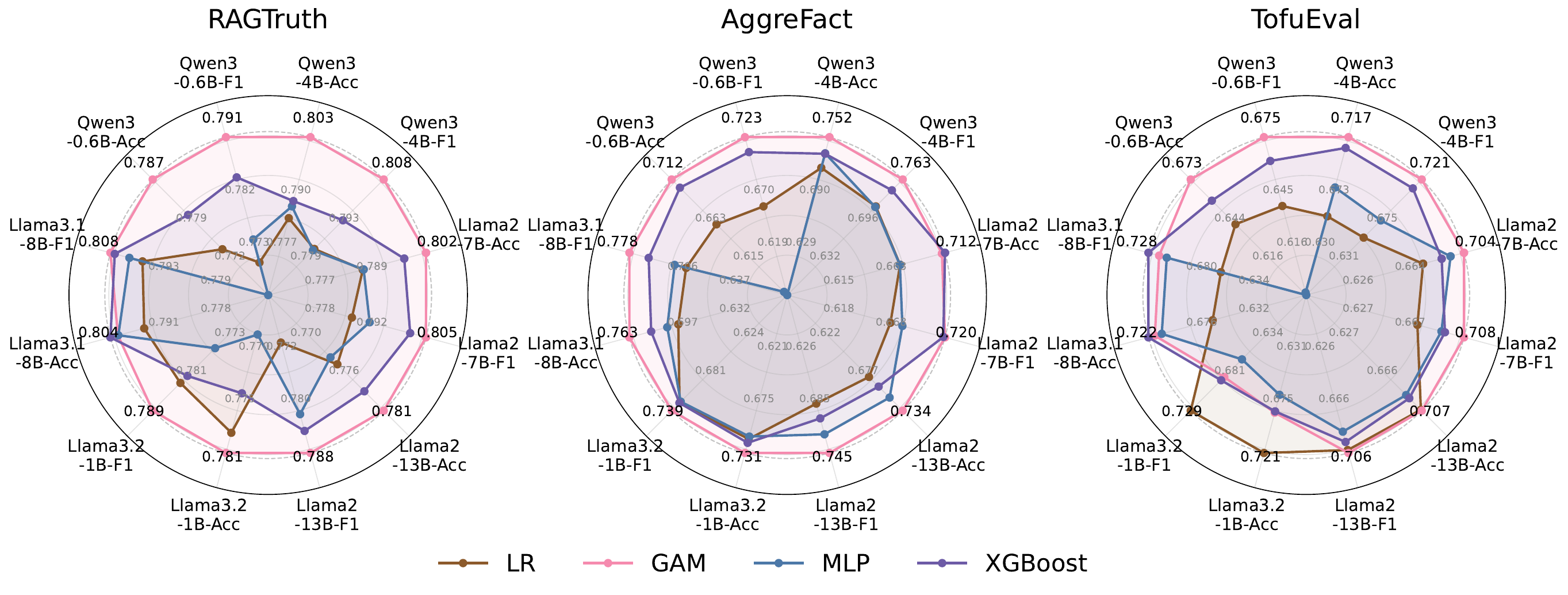}
    \end{center}
    \vspace{-0.05in}
    \caption{Comparison of logistic regression (LR) and generalized additive model (GAM), multilayer perceptron (MLP), and eXtreme Gradient Boosting (XGBoost) as predictors for RAGLens, evaluated across multiple models and datasets.}
    \vspace{-0.05in}
    \label{fig:predictor_ablation}
\end{figure}
\section{Conclusion}

In summary, this work demonstrates that SAEs can serve as powerful and interpretable tools for detecting RAG hallucinations. By leveraging internal representations of LLMs, our proposed RAGLens framework not only achieves state-of-the-art performance across multiple benchmarks, but also provides transparent explanations at both local and global levels. Beyond detection, the interpretability of RAGLens enables actionable feedback to mitigate hallucinations, improving the reliability of RAG systems in practice. These findings highlight the broader potential of sparse representation probing for enhancing model faithfulness and open up future directions for integrating lightweight, interpretable SAE-based detectors into real-world applications where trust and accuracy are critical.

\bibliography{iclr2026_conference}
\bibliographystyle{iclr2026_conference}
\clearpage
\appendix
\section{Dataset and Baseline Details} \label{sec:baseline}

\subsection{Datasets}

Here are the detailed descriptions of the datasets used in our experiments.

\paragraph{RAGTruth.}
RAGTruth \citep{niu2024ragtruth} is a large-scale dataset featuring nearly 18000 naturally generated responses from a range of open- and closed-source LLMs under retrieval-augmented generation settings. The benchmark includes three subtasks: summarization (RAGTruth-Summary), data-to-text generation (RAGTruth-Data2txt), and question answering (RAGTruth-QA). We use the training split of RAGTruth for feature selection and model fitting, and report results on the held-out test set.

\paragraph{Dolly.}
For the Dolly dataset in the ``Accurate Context'' setting \citep{hu2024refchecker}, each example presents the model with a context document verified to be relevant and accurate. The model is tasked with generating responses faithful to this context. Following \citet{sun2025redeep}, we perform two-fold cross-validation for model evaluation, alternating between halves of the test set for feature selection and assessment.

\paragraph{AggreFact.}
AggreFact \citep{tang2023understanding} compiles outputs and annotations from various summarization and factual consistency datasets. Our experiments focus on the ``SOTA'' subset, consisting of summaries produced by models such as T5, BART, and PEGASUS. Each summary is paired with its source document, and factual consistency is annotated by humans. We use the validation set for feature selection and model training, evaluating final performance on the designated test split.

\paragraph{TofuEval.}
TofuEval \citep{tang2024tofueval} is a benchmark designed for topic-focused dialogue summarization. We utilize its MeetingBank portion, which consists of multi-turn meeting transcripts annotated with topic boundaries. LLMs are tasked with generating topic-specific summaries from the full dialogue. Annotations include binary sentence-level factuality as well as free-form explanations for inconsistent content. For this dataset, feature selection and model fitting are performed on the development set, and evaluation is conducted on the test set.

\subsection{Baselines}

We benchmark RAGLens against a diverse set of representative hallucination detection methods. For clarity, we group all baselines into three main categories, detailed below.

\paragraph{Prompting-based Detectors.}
This category captures hallucination signals by leveraging the generative and reasoning capabilities of LLMs to produce token-level decisions. It encompasses various prompting strategies, such as prompt engineering \citep{friel2023chainpoll}, multi-agent collaboration \citep{cohen2023lm}, as well as supervised fine-tuning. Following \citet{sun2025redeep}, we include a fine-tuned Llama2-13B(LR) baseline, where the detector is trained on RAGTruth using LoRA \citep{hu2024lora}. The full list of baseline detectors in this category includes: Prompt \citep{niu2024ragtruth}, Llama2-13B(LR) \citep{sun2025redeep}, LMvLM \citep{cohen2023lm}, FActScore \citep{min2023factscore}, FactCC \citep{kryscinski2020evaluating}, ChainPoll \citep{friel2023chainpoll}, RAGAS \citep{es2024ragas}, TruLens \citep{truera_trulens_2025}, RefCheck \citep{hu2024refchecker}, and P(True) \citep{kadavath2022language}.

\paragraph{Uncertainty-based Detectors.}
These methods assess hallucination likelihood based on the uncertainty of LLM outputs, typically measured via sampled tokens or the distribution of output logits prior to token generation. The complete list of detectors in this category includes: SelfCheckGPT \citep{manakul2023selfcheckgpt}, LN-Entropy \citep{malinin2021uncertainty}, Energy \citep{liu2020energy}, Focus \citep{zhang2023enhancing}, and Perplexity \citep{ren2023out}.

\paragraph{Internal Representation-based Detectors.}
These approaches probe the internal representations of the LLM, such as hidden states, attention patterns, or other intermediate representations, to identify unfaithful generations. By analyzing these internal model dynamics, these detectors aim to capture subtle signals associated with hallucination that may not be reflected in output tokens or logits. Methods in this category include: EigenScore \citep{chen2024inside}, SEP \citep{han2024semantic}, SAPLMA \citep{azaria2023internal}, ITI \citep{li2024llms}, and ReDeEP \citep{sun2025redeep}.

\section{Implementation Details} \label{sec:implementation}

While a trained SAE is required for detection in RAGLens, the rapid development of the SAE community \citep{shu2025survey} has led to the public release of many pre-trained SAEs for widely used LLMs. Moreover, SAEs are typically trained on general-purpose corpora rather than task-specific datasets, allowing for their reuse in analyses beyond the scope of this work. To demonstrate the efficiency and practicality of our pipeline, we utilize publicly available SAEs whenever possible, showing that our method does not require resource-intensive, specially tuned SAEs for effective performance. Below, we list the specific SAEs used in our experiments:

\begin{itemize}
    \item \textbf{Llama2-7B:} We use the pretrained SAE from \url{https://huggingface.co/yuzhaouoe/Llama2-7b-SAE}, which includes SAEs for multiple layers. Specifically, we select the SAE trained on ``layers.15'', with an expansion factor of 32 and Top-K activation (\(K=192\)).
    \item \textbf{Llama2-13B:} As no public SAE is available for Llama2-13B, we train our own using the \texttt{sparsify} package\footnote{\url{https://github.com/EleutherAI/sparsify}} with default settings. The SAE is trained on ``layers.15'', with an expansion factor of 16 and Top-K activation (\(K=16\)).
    \item \textbf{Llama3.2-1B:} We use the pretrained SAE from \url{https://huggingface.co/EleutherAI/sae-Llama-3.2-1B-131k}, which covers multiple layers. For results in Section \ref{sec:generalization}, we select the SAE trained on ``layers.6.mlp'', and for the layer-wise analysis in Section \ref{sec:discussions}, we use all available SAEs. The SAEs have an expansion factor of 32 and Top-K activation (\(K=32\)).
    \item \textbf{Llama3-8B:} We use the pretrained SAE from \url{https://huggingface.co/EleutherAI/sae-llama-3-8b-32x}, utilizing all available SAEs for the layer-wise analysis in Section \ref{sec:discussions}. The SAEs have an expansion factor of 32 and Top-K activation (\(K=192\)).
    \item \textbf{Llama3.1-8B:} We use the pretrained SAE from \url{https://huggingface.co/Goodfire/Llama-3.1-8B-Instruct-SAE-l19}, which contains the SAE trained on ``layers.19''. The SAE has an expansion factor of 16 and ReLU activation.
    \item \textbf{Qwen3-0.6B:} As there is no public SAE for Qwen3-0.6B, we train our own using the \texttt{sparsify} package with default settings. For results in Section \ref{sec:generalization}, we select the SAE trained on ``layers.17'', and for layer-wise analysis, we use all trained SAEs. The SAEs have an expansion factor of 32 and Top-K activation (\(K=16\)).
    \item \textbf{Qwen3-4B:} Similarly, we train our own SAEs for Qwen3-4B. For Section \ref{sec:generalization}, we select the SAE trained on ``layers.22''; for layer-wise analysis, we use all available SAEs. The SAEs have an expansion factor of 32 and Top-K activation (\(K=16\)).
\end{itemize}

To compute mutual information (MI) in RAGLens, we estimate the MI value of each continuous SAE feature by discretizing feature values into bins using quantile thresholds. Specifically, we partition the value range into 50 bins per feature and compute MI values in chunks for GPU acceleration. After ranking features based on estimated MI, we select the top 1000 features for subsequent Generalized Additive Model (GAM) fitting in RAGLens.

For GAM fitting, we deploy the Explainable Boosting Machine (EBM) \citep{nori2019interpretml}, a high-performance, tree-based GAM implementation that flexibly models nonlinear effects of features on the target output. Across all experiments, we set the maximum number of bins in each feature's shape function to 32, a validation size of $10\%$, and a maximum of 1000 boosting rounds.

To generate semantic explanations for selected SAE features, we collect representative activation cases by sampling 12 examples with the highest activations and 12 examples distributed across quantiles from the RAGTruth training data. These cases are then provided to GPT-5, which summarizes the underlying semantic concept captured by each feature using the template in Figure \ref{fig:prompt_summary}.

Prompt templates for all LLM text-generation calls are shown in Appendix \ref{app:prompt_templates}.

All experiments are conducted on a server equipped with an AMD EPYC 7313 CPU and four NVIDIA A100 GPUs.

\section{Causal Intervention of SAE Features} \label{app:causal_intervention}

For SAE features that consistently activate prior to hallucinated content, we investigate whether direct intervention on these features can causally influence the model's generation. Specifically, we manipulate the activation values of selected SAE features at key tokens preceding hallucinated spans and observe the resulting changes in model outputs. This analysis assesses whether these features not only correlate with hallucination but also play a causal role in driving unfaithful generations.

Table \ref{tab:causal_llama3.1_8b} presents a case study on Feature 22790 from Llama3.1-8B, which reliably activates before hallucinated numeric or temporal details (e.g., firing on the token ``of'' in the prefix ``[...] at the age of'', which often leads to unsupported ages). When we suppress this feature (e.g., set its value to 0 or -20), the model continues the problematic prefix with hallucinated, ungrounded numbers. In contrast, manually setting the feature to a large positive value (e.g., 20) steers the model to remain faithful to the context, producing follow-up tokens that avoid hallucinated specifics (e.g., using an unspecified age or time). This suggests that Feature 22790 reflects the model's awareness of potentially hallucinated numeric or temporal details, and that overactivating it can encourage more faithful behavior in such scenarios.

\begin{table}[h!]
\centering
\caption{Examples of causal interventions on Feature 22790 in Llama3.1-8B. This feature is consistently activated prior to hallucinated numeric/time specifics. Tokens in \sethlcolor{lightred}\hl{red} indicate the hallucinated content.}
\label{tab:causal_llama3.1_8b}
\begin{center}
\begin{tabular}{l|l|c|l}
\toprule
\textbf{Context} & \textbf{Prefix} & \textbf{Value} & \textbf{Output} \\
\midrule
\multirow{4}{*}{\makecell[l]{No mention of\\the age}} & \multirow{4}{*}{\it \makecell[l]{[...] at the\\age of}} & -20.0 & {\it [...] of \sethlcolor{lightred}\hl{30}.} \\
\cmidrule{3-4}
 &  & 0.0 & {\it [...] of \sethlcolor{lightred}\hl{25}.} \\
\cmidrule{3-4}
 &  & 20.0 & {\it \makecell[l]{[...] of an unspecified age.}} \\
\midrule
\multirow{4}{*}{\makecell[l]{No mention of\\the release date}} & \multirow{4}{*}{\it \makecell[l]{[...] scheduled to\\be released in}} & -20.0 & {\it [...] in \sethlcolor{lightred}\hl{2016}.} \\
\cmidrule{3-4}
 &  & 0.0 & {\it [...] in the future.} \\
\cmidrule{3-4}
 &  & 20.0 & {\it \makecell[l]{[...] in an unspecified time frame.}} \\
\bottomrule
\end{tabular}
\end{center}
\end{table}

An additional case study on Feature 71784 in Llama2-13B is shown in Table \ref{tab:causal_llama2}. This feature is associated with hallucinations about opening hours (day/time) and ratings, and typically activates only when the hallucinated time or rating is already being produced (e.g., ``Monday'' in an inconsistent hours-of-operation statement). For these cases, we perturb the feature value on the token immediately preceding the hallucinated word (e.g., ``on'' in ``on Mondays''). The results also confirm that manipulating SAE features identified by RAGLens can steer the model's behavior and demonstrate a causal relationship between these features and RAG hallucinations.

\begin{table}[h!]
\centering
\caption{Examples of causal interventions on Feature 71784 in Llama2-13B. This feature is consistently activated on hallucinations about opening hours (day/time) and ratings. Tokens in \sethlcolor{lightred}\hl{red} indicate the hallucinated content.}
\label{tab:causal_llama2}
\begin{center}
\begin{tabular}{l|l|c|l}
\toprule
\textbf{Context} & \textbf{Prefix} & \textbf{Value} & \textbf{Output} \\
\midrule
\multirow{5.5}{*}{\it \makecell[l]{[...] "Monday": \\"0:0-0:0" [...]}}  &
\multirow{5.5}{*}{\it \makecell[l]{[...] Restaurants\\Hours:}} &
-20.0 & {\it \makecell[l]{[...]: the restaurant is open from\\11:00 AM to 21:00 PM, \sethlcolor{lightred}\hl{Monday} to Sunday.}} \\
\cmidrule{3-4}
 &  & 0.0 & {\it \makecell[l]{[...]: the restaurant is open from\\11:00 AM to 21:00 PM, \sethlcolor{lightred}\hl{Monday} to Sunday.}} \\
\cmidrule{3-4}
 &  & 20.0 & {\it [...]: Monday: 0:00 - 0:00 [...]} \\
\midrule
\multirow{3.5}{*}{\it \makecell[l]{[...] "Monday": \\"0:0-0:0" [...]}}  &
\multirow{3.5}{*}{\it \makecell[l]{[...] no information\\available for the\\business's hours on}} &
-20.0 & {\it [...] on hours of operation on \sethlcolor{lightred}\hl{Mondays}.} \\
\cmidrule{3-4}
 &  & 0.0 & {\it [...] on hours of operation on \sethlcolor{lightred}\hl{Mondays}.} \\
\cmidrule{3-4}
 &  & 20.0 & {\it [...] on holidays.} \\
\bottomrule
\end{tabular}
\end{center}
\end{table}

However, for features like the one in Table \ref{tab:causal_llama2} that only activate concurrently with or after hallucinated tokens, direct intervention is impractical for preventing hallucinations, as the problematic content has already been generated by the time these features fire. Furthermore, the distance between the hallucinated tokens and the token with high activation is not always consistent across features and examples. Thus, while causal intervention on SAE features is feasible in certain scenarios, it is not a universal solution for hallucination mitigation. This limitation motivates our focus on post-hoc text-based feedback in the main mitigation pipeline.

\section{Discussion on SAE Feature versus Hidden State}

To further investigate the contribution of SAE-derived features to RAGLens performance, we conduct an ablation study by replacing SAE features with the raw hidden states of Llama2-7B, while retaining the MI-based feature selection and GAM classifier. Figure \ref{fig:hidden_sae} compares hallucination detection performance using hidden states versus SAE features across varying numbers of selected dimensions ($K'$). When $K'$ is large, hidden states achieve performance comparable to SAE features, which is expected since SAE features are derived from hidden states.

\begin{figure}[h!]
    \begin{center}
    \includegraphics[width=1.0\linewidth]{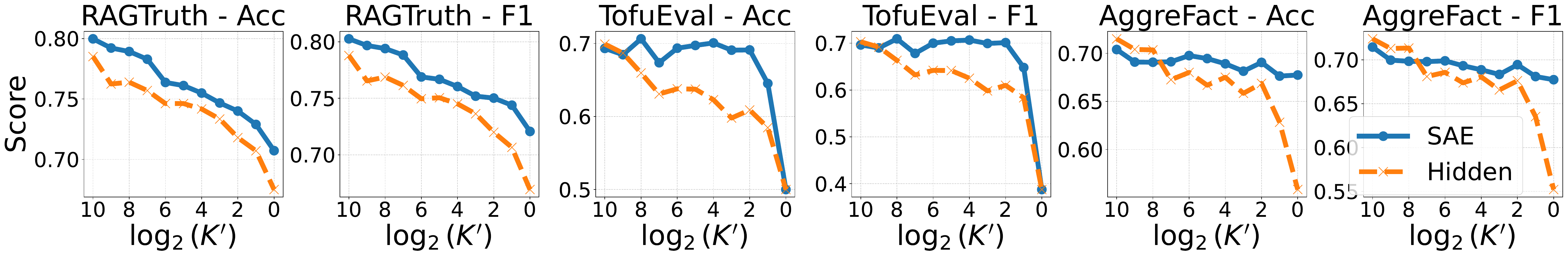}
    \end{center}
    \caption{Effect of varying the number of selected features ($K'$) on hallucination detection performance, comparing SAE features (SAE) and hidden states (Hidden).}
    \label{fig:hidden_sae}
\end{figure}

However, as $K'$ decreases, the performance of hidden states degrades more rapidly, especially on single-task datasets such as AggreFact and TofuEval. This indicates that SAE features more effectively disentangle hallucination-related signals from other information and concentrate them into a compact set of dimensions, particularly when the predictor is applied to narrow domains. Such compact representations improve interpretability and facilitate downstream applications like hallucination mitigation, as only a small number of salient features need to be monitored to achieve effective control.

\section{Discussion on SAE Feature Interpretability} \label{app:feature_interpretability}

To validate the semantic consistency of SAE features, we extend the analysis in Table \ref{tab:interpret} by examining top-activated examples from the SAE's pretraining corpus. Specifically, we compute activations for Feature 22790 in Llama3.1 8B over the first 10,000 samples from the lmsys/lmsys chat 1m corpus\footnote{\url{https://huggingface.co/datasets/lmsys/lmsys-chat-1m}} and inspect the highest activation cases, as shown in Table \ref{tab:corpus_interpretation}. Although the pretraining corpus contains diverse structures and languages, including clinical dialogue transcripts and telecom statements in German, we find that Feature 22790 is consistently activated in scenarios where the response is about to produce specific numbers or dates that are likely hallucinated. This pattern aligns with the feature's summarized semantic meaning from RAGTruth and the representative examples shown in Table \ref{tab:interpret}, demonstrating that the feature robustly captures hallucination-related signals across both task-specific and pretraining data.

\begin{table}[h!]
\centering
\caption{Examples from the pre-training corpus with high activations of Feature 22790 in Llama3.1-8B. Tokens highlighted in \sethlcolor{lightred}\hl{red} indicate locations of strong feature activation.}
\label{tab:corpus_interpretation}
\begin{center}
\begin{tabular}{l|l|l}
\toprule
\textbf{Sample Index} & \textbf{Input} & \textbf{Output} \\
\midrule
9384 & {\it \makecell[l]{TRANSCRIPT='Hello doctor I have\\fever and cough. Okay take paracetamol\\and go home and rest.' [...]}} & {\it \makecell[l]{[...] Patient advises they have\\been experiencing symptoms \sethlcolor{lightred}\hl{for}\\\sethlcolor{lightred}\hl{the past} two days [...]}} \\
\midrule
3298 & {\it \makecell[l]{User: Telekom Deutschland GmbH [...]}} & {\it \makecell[l]{[...] "summary": "Mobilfunk-\\Rechnung für den \sethlcolor{lightred}\hl{Monat} März\\2023" [...]}} \\
\bottomrule
\end{tabular}
\end{center}
\end{table}

To further assess the robustness of the distilled feature explanation across diverse scenarios, we prompt GPT-5 to predict the activation level of Feature 22790 on 24 held-out RAGTruth test cases (comprising 8 top-activated and 16 quantile-distributed examples), using only the natural-language summary. For each case, the three most highly activated tokens are highlighted, and GPT-5 is asked to rate the expected feature activation on a scale from 0 (feature not present) to 5 (very strong match). Comparing these predicted scores to the actual SAE activations yields a Pearson correlation of 0.6731 (p < 0.05), suggesting that the explanation reliably captures when the feature should activate across a broad range of examples.

However, while SAEs are designed to disentangle distinct semantic concepts from hidden states, some SAE features may remain generic or polysemantic, limiting their interpretability, which is a challenge widely recognized in current SAE research \citep{bricken2023towards,huben2023sparse}. Although RAGLens already achieves accurate and interpretable RAG hallucination detection using existing SAE foundations, its architecture-agnostic design allows it to benefit from future advances in SAE methods, which may enable even more transparent and effective detectors for RAG hallucinations.

\section{Ablation Study of SAE Feature Extraction Methods in RAGLens}

In this section, we investigate whether our SAE-based design, which combines max pooling over tokens with mutual-information-based feature selection, offers advantages over alternative ways of using SAE features \citep{ferrando2025do,tillman2025investigating,xin2025sparse,suresh2025noise}. Concretely, we compare four variants on the RAGTruth subtasks:

\begin{enumerate}
    \item \textbf{Last Token + Selection + GAM}: SAE features taken only from the last generated token, followed by MI-based feature selection and a GAM classifier.
    \item \textbf{Max Pooled + No Selection + LR}: max-pooled SAE features across tokens, using all dimensions as input to a logistic regression (LR) classifier (no feature selection).
    \item \textbf{Max Pooled + Selection + LR}: max-pooled SAE features with MI-based feature selection, using LR as the classifier.
    \item \textbf{Max Pooled + Selection + GAM (RAGLens)}: our full RAGLens variant, which applies MI-based feature selection to max-pooled SAE features and then fits a GAM.
\end{enumerate}

Table~\ref{tab:sae_ablation} reports accuracy and AUC for these settings. Using max pooling with feature selection and a GAM (\emph{Max Pooled + Selection + GAM}) consistently outperforms both: (i) the last-token baseline (\emph{Last Token + Selection + GAM}), which discards earlier activations that may precede hallucinated content, and (ii) the \emph{Max Pooled + No Selection + LR} baseline, which uses all SAE dimensions without selection and is thus less efficient.

Additionally, \emph{Max Pooled + Selection + LR} performs comparably to using all features without selection, indicating that MI-based selection preserves the most informative SAE dimensions for detecting unfaithful model outputs. Overall, these results demonstrate that our approach captures most hallucination-relevant signals in a compact, efficient feature set and achieves superior detection performance compared to alternative SAE usage strategies.

\begin{table}[t]
    \centering
    \caption{Ablation study of SAE feature extraction and classifier choices on RAGTruth subtasks using Llama2-7B. We compare last-token versus max-pooled SAE features, with and without mutual information-based feature selection. Due to the high computational cost of fitting GAMs on all features, we use logistic regression (LR) as the classifier when no feature selection is applied.}
    \label{tab:sae_ablation}
\begin{center}
\resizebox{\textwidth}{!}{
    \begin{tabular}{lllccc cc ccc}
        \toprule
        \multirow{2.5}{*}{Source} & \multirow{2.5}{*}{Selection} & \multirow{2.5}{*}{Classifier}  & \multicolumn{2}{c}{RAGTruth-Summary} & \multicolumn{2}{c}{RAGTruth-QA} & \multicolumn{2}{c}{RAGTruth-Data2txt} \\
        \cmidrule(lr){4-5} \cmidrule(lr){6-7} \cmidrule(lr){8-9}
        &&    & Acc & AUC
        & Acc & AUC
        & Acc & AUC \\
        \midrule
        Last Token   & Yes & GAM & 0.6293 & 0.7507 & 0.6908 & 0.8101 & 0.7454 & 0.8296 \\
        Max Pooled   & No  & LR  & 0.6734 & 0.7305 & 0.7356 & 0.8344 & 0.7499 & 0.8397 \\
        Max Pooled   & Yes & LR  & 0.6718 & 0.7663 & 0.7572 & 0.8472 & 0.7085 & 0.8014 \\
        Max Pooled   & Yes & GAM & 0.6973 & 0.8191 & 0.7717 & 0.8835 & 0.7668 & 0.8454 \\
        \bottomrule
    \end{tabular}
    }
\end{center}
\end{table}

\section{Proof of Theorem \ref{thm:maxpool-sparse}} \label{app:proofs}
\begin{theorem}[Restatement of Theorem \ref{thm:maxpool-sparse}]
Fix a feature index \(k\) and suppress \(k\) in notation. For tokens \(t=1,\ldots,T\),
\begin{equation}
z_t \;=\;
\begin{cases}
0 & \text{with probability } 1-p_\ell,\\
V_t & \text{with probability } p_\ell,
\end{cases}
\quad\text{independently over }t,
\end{equation}
where \(V_t\) are i.i.d.\ from a label-independent distribution \(F\) on \((0,\infty)\).
Let \(\bar z=\max_{1\le t\le T} z_t\), \(\pi=\Pr(\ell=1)\), \(\bar p=\tfrac{1}{2}(p_1+p_0)\), and \(\Delta p=p_1-p_0\).
If \(T\,\bar p \ll 1\), then (in bits)
\begin{equation}
I(\bar z;\ell)
\;=\;
\frac{\pi(1-\pi)}{2\ln 2}\;\frac{T\,(\Delta p)^2}{\bar p}
\;+\; O\!\big((T\bar p)^2\big).
\end{equation}
In particular, \(I(\bar z;\ell)>0\) iff \(p_1\neq p_0\); the leading dependence is linear in \(T\) and quadratic in \(\Delta p\).
\end{theorem}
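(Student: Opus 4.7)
The plan is to replace the continuous max $\bar z$ by the binary activation indicator $A = \mathbf{1}\{\bar z>0\}$, compute $I(A;\ell)$ in closed form, Taylor-expand it in the sparse regime, and then show that $\bar z$ contributes only a higher-order correction beyond $A$.

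First I would write $\Pr(A=0\mid \ell)=(1-p_\ell)^T$, giving $q_\ell := \Pr(A=1\mid \ell)=1-(1-p_\ell)^T$. In the regime $T\bar p \ll 1$, a binomial expansion yields $q_\ell = Tp_\ell + O((Tp_\ell)^2)$, so $\bar q := \pi q_1+(1-\pi)q_0 = T\bar p(1+O(T\bar p))$ and $q_1-q_0 = T\Delta p(1+O(T\bar p))$. Next, using the binary-channel identity
\begin{equation*}
I(A;\ell) = h(\bar q) - \pi h(q_1) - (1-\pi) h(q_0),
\end{equation*}
I would write $q_\ell=\bar q + \delta_\ell$ with $\pi\delta_1+(1-\pi)\delta_0 = 0$ and $\pi\delta_1^2+(1-\pi)\delta_0^2 = \pi(1-\pi)(q_1-q_0)^2$, then Taylor-expand $h$ to second order around $\bar q$. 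Because the linear term vanishes and $h''(q) = -1/[q(1-q)\ln 2]$, this gives
\begin{equation*}
I(A;\ell) = \frac{\pi(1-\pi)(q_1-q_0)^2}{2\bar q\,\ln 2}\bigl(1+O(\bar q)\bigr) + R,
\end{equation*}
where $R$ is a cubic Taylor remainder. Substituting $\bar q\approx T\bar p$ and $q_1-q_0\approx T\Delta p$ produces the claimed leading term $\tfrac{\pi(1-\pi)}{2\ln 2}\,T(\Delta p)^2/\bar p$, and careful bookkeeping of $R$ together with the $O((Tp_\ell)^2)$ correction to $q_\ell$ absorbs the stated $O((T\bar p)^2)$ residual.

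For the lower bound, data processing gives $I(\bar z;\ell) \ge I(A;\ell)$ since $A$ is a deterministic function of $\bar z$. For the upper bound, I would decompose $I(\bar z;\ell) = I(A;\ell) + \Pr(A=1)\,I(\bar z;\ell\mid A=1)$, noting $I(\bar z;\ell\mid A=0)=0$. Letting $N=\sum_t \mathbf{1}\{z_t>0\}$, the key observation is that given $N$, $\bar z$ is the maximum of $N$ i.i.d.\ draws from $F$, which has no dependence on $\ell$, so $\bar z\perp \ell \mid N$. Data processing then yields $I(\bar z;\ell\mid A=1)\le I(N;\ell\mid N\ge 1)$. In the single-hit regime, $\Pr(N\ge 2\mid N\ge 1,\ell)=O(Tp_\ell)$, so the same second-order entropy expansion bounds this by $O(T(\Delta p)^2/\bar p)$; multiplying by $\Pr(A=1)=O(T\bar p)$ and using $|\Delta p|\le 2\bar p$ gives $O((T\bar p)^2)$, as required.

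The main obstacle is controlling the Taylor remainder cleanly. Since $h''(q)$ diverges as $q\to 0$ and we are expanding precisely in the regime $\bar q\ll 1$, one must verify that the cubic term $h'''(\xi)\delta_\ell^3/6$ (where $\xi$ lies between $\bar q$ and $q_\ell$) really contributes only at order $(T\bar p)^2$ rather than competing with the leading term; this requires using $|\delta_\ell|\le |q_1-q_0|\le 2T\bar p$ and the sign-uniform behavior of $h'''$ on $(0,1/2)$. The second subtlety is checking that the data-processing reduction $\bar z\perp \ell\mid N$ is truly exact, which follows from the assumption that the active-value law $F$ is identical under both labels.
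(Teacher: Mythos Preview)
Your plan is essentially the paper's: reduce to $A=\mathbf{1}\{\bar z>0\}$, Taylor-expand $I(A;\ell)$ in the small-$q$ regime, and bound the excess $I(\bar z;\ell)-I(A;\ell)$ via the chain rule. One slip: you center the expansion at the mixture $r=\pi q_1+(1-\pi)q_0$ and then write $r=T\bar p(1+O(T\bar p))$, but to leading order $r=T[\pi p_1+(1-\pi)p_0]$, which differs from $T\bar p=\tfrac{T}{2}(p_1+p_0)$ by $(\pi-\tfrac12)T\Delta p$. This is $O(T\bar p)$, not $O((T\bar p)^2)$, so for $\pi\neq\tfrac12$ your substitution would put the wrong constant in the denominator of the leading term. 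The paper avoids this by expanding around the \emph{midpoint} $\tfrac12(q_0+q_1)$, which genuinely satisfies $\tfrac12(q_0+q_1)=T\bar p(1+O(T\bar p))$ and recovers the stated $\bar p$ directly; switching your expansion point is the easiest fix.

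For the excess term, your route is actually a bit cleaner than the paper's. The paper writes $p_{\bar z\mid A=1,\ell}$ as a mixture over $N=\sum_t\mathbf{1}\{z_t>0\}$, bounds the total-variation distance by the multi-activation probabilities $\Pr(N\ge2\mid A=1,\ell)$, and then invokes an external Jensen--Shannon $\le C\cdot\mathrm{TV}^2$ inequality. Your observation $\bar z\perp\ell\mid N$ (valid because $F$ is label-independent) gives $I(\bar z;\ell\mid A=1)\le I(N;\ell\mid N\ge1)$ directly by data processing, after which the same Bernoulli-MI expansion applied to $\mathbf{1}\{N\ge2\}$ finishes the bound without the external lemma.
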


\begin{proof}
Write \(A=\mathbf{1}\{\bar z>0\}\), \(N=\sum_{t=1}^T \mathbf{1}\{z_t>0\}\), and \(h(u)=-u\log_2 u-(1-u)\log_2(1-u)\).

\paragraph{Step 1: Exact MI for the activation indicator.}
Independence across tokens implies
\begin{equation}
q_\ell \;:=\; \Pr(A{=}1\mid \ell) \;=\; \Pr(N\ge 1\mid \ell) \;=\; 1-(1-p_\ell)^T .
\end{equation}
Hence \(A\mid \ell\sim\mathrm{Bernoulli}(q_\ell)\) and
\begin{equation}\label{eq:IA}
I(A;\ell)
= h\!\big(\pi q_1+(1-\pi)q_0\big)\;-\;\big[\pi\,h(q_1)+(1-\pi)\,h(q_0)\big].
\end{equation}

\paragraph{Step 2: Small-gap expansion of \(I(A;\ell)\).}
Let \(r=\pi q_1+(1-\pi)q_0\), \(\bar q=\tfrac{1}{2}(q_1+q_0)\), and \(\Delta q=q_1-q_0\).
A second-order Taylor expansion of \(h\) about \(\bar q\) gives
\begin{equation}
h(q_1)=h(\bar q)+\tfrac{\Delta q}{2}h'(\bar q)+\tfrac{(\Delta q)^2}{8}h''(\bar q)+O((\Delta q)^3),
\end{equation}
\begin{equation}
h(q_0)=h(\bar q)-\tfrac{\Delta q}{2}h'(\bar q)+\tfrac{(\Delta q)^2}{8}h''(\bar q)+O((\Delta q)^3),
\end{equation}
\begin{equation}
h(r)=h(\bar q)+(2\pi-1)\tfrac{\Delta q}{2}h'(\bar q)+\tfrac{(2\pi-1)^2(\Delta q)^2}{8}h''(\bar q)+O((\Delta q)^3).
\end{equation}
Plugging into Equation \ref{eq:IA}, the linear terms cancel, and since \(h''(u)=-[u(1-u)\ln 2]^{-1}\),
\begin{equation}\label{eq:IAquad}
I(A;\ell)
=
\frac{\pi(1-\pi)}{2\ln 2}\;\frac{(\Delta q)^2}{\bar q\,(1-\bar q)}
\;+\; O((\Delta q)^3).
\end{equation}

\paragraph{Step 3: Relating $I(\bar z;\ell)$ and $I(A;\ell)$, with a JS--TV bound.}
Because $A = \mathbf{1}\{\bar z>0\}$ is a deterministic function of $\bar z$, the chain rule gives
\begin{equation}\label{eq:chain}
I(\bar z;\ell) \;=\; I(A;\ell) \;+\; I(\bar z;\ell \mid A).
\end{equation}
When $A=0$, $\bar z = 0$ almost surely, so $I(\bar z;\ell \mid A=0)=0$.
When $A=1$, one can write
\begin{equation}
p_{\bar z \mid A=1,\ell} \;=\; \sum_{n\ge1} w_\ell(n)\,F^{(n)},
\end{equation}
where $w_\ell(n)=\Pr(N=n\mid A=1,\ell)$ and $F^{(n)}$ is the distribution of the maximum of $n$ label-independent draws from $F$.
Thus given $A=1$, the only dependence on $\ell$ is via the mixture weights $\{w_1(n)-w_0(n)\}$.

By the definition of total variation (TV) distance,
\begin{equation}
\begin{aligned}
\mathrm{TV}\!\left(p_{\bar z\mid A=1,1},\,p_{\bar z\mid A=1,0}\right) 
\;&=\; \tfrac12 \int \bigl|p_{\bar z\mid A=1,1}(z) - p_{\bar z\mid A=1,0}(z)\bigr| \,dz\\
\;&\le\; \Pr(N \ge 2 \mid A=1,1) + \Pr(N \ge 2 \mid A=1,0),
\end{aligned}
\end{equation}
up to constant factors.

Since mutual information conditioned on $A=1$ is the Jensen--Shannon (JS) divergence between these two conditional distributions (with weight $\Pr(\ell=1 \mid A=1)$ over $\ell$), one can invoke a standard bound:
\begin{equation}
\mathrm{JS}\!\left(p_{\bar z\mid A=1,1},\,p_{\bar z\mid A=1,0}\right) 
\;\le\; C\; \mathrm{TV}\!\left(p_{\bar z\mid A=1,1},\,p_{\bar z\mid A=1,0}\right)^2,
\end{equation}
for some constant $C$ depending on the mixing weight \citep{corander2021jensen}.

Thus,
\begin{equation}
I(\bar z;\ell \mid A=1) \;=\; O\!\big((\Pr(N \ge 2 \mid A=1,1) + \Pr(N \ge 2 \mid A=1,0))^2\big).
\end{equation}
Multiplying by $\Pr(A=1)\le 1$ gives
\begin{equation}
I(\bar z;\ell) - I(A;\ell) \;=\; O\!\big(\Pr(N \ge 2 \mid 1)^2 + \Pr(N \ge 2 \mid 0)^2\big),
\end{equation}
which, under rarity ($T \bar p \ll 1$), is $o((T\bar p)^2)$.

\paragraph{Step 4: Specializing to independence and the sparse regime.}
Under independence with per-token rate \(p_\ell\),
\begin{equation}
\Pr(N\!\ge\!2\mid \ell) \;=\; 1-(1-p_\ell)^T - T p_\ell (1-p_\ell)^{T-1}
\;=\; \binom{T}{2}p_\ell^2 + O(T^3 p_\ell^3).
\end{equation}
Thus
\begin{equation}\label{eq:gapO}
I(\bar z;\ell) \;=\; I(A;\ell) \;+\; O\!\big(T^2 \bar p^{\,2}\big),
\end{equation}
uniformly for \(p_\ell\) with \(\bar p=\tfrac{1}{2}(p_1+p_0)\).

\paragraph{Step 5: Substitute sparse approximations.}
For \(T\bar p\ll 1\),
\begin{equation}
q_\ell = 1-(1-p_\ell)^T = T p_\ell - \binom{T}{2}p_\ell^2 + O(T^3 p_\ell^3),
\end{equation}
\begin{equation}
\bar q = T \bar p + O(T^2 \bar p^{\,2}),
\end{equation}
\begin{equation}
\Delta q = T \Delta p + O(T^2 \bar p\,|\Delta p|).
\end{equation}
Plug these into Equation \ref{eq:IAquad}. Since \(1-\bar q=1+O(T\bar p)\),
\begin{equation}
\frac{(\Delta q)^2}{\bar q(1-\bar q)}
=
\frac{T^2 (\Delta p)^2}{T\bar p}\;+\; O\!\big(T^2(\Delta p)^2\cdot T\bar p\big)
=
\frac{T(\Delta p)^2}{\bar p} \;+\; O\!\big(T^3 \bar p\,(\Delta p)^2\big).
\end{equation}
Moreover, \((\Delta q)^3=O\big(T^3|\Delta p|^3\big)=o\big(T^2\bar p^{\,2}\big)\) under \(T\bar p\ll 1\) and bounded \(|\Delta p|\).
Hence
\begin{equation}\label{eq:IAfinal}
I(A;\ell)
=
\frac{\pi(1-\pi)}{2\ln 2}\;\frac{T(\Delta p)^2}{\bar p}
\;+\;
O\!\big(T^2 \bar p^{\,2}\big).
\end{equation}
Combining Equation \ref{eq:IAfinal} with Equation \ref{eq:gapO} via Equation \ref{eq:chain} yields
\begin{equation}
I(\bar z;\ell)
=
\frac{\pi(1-\pi)}{2\ln 2}\;\frac{T(\Delta p)^2}{\bar p}
\;+\; O\!\big(T^2 \bar p^{\,2}\big),
\end{equation}
which is the claimed statement since \(T^2\bar p^{\,2}=(T\bar p)^2\).
\end{proof}

\section{Interpretation of Additional Hallucination-related Features} \label{app:additional_features}

Table \ref{tab:explanation} highlights additional representative features that RAGLens activates when detecting hallucinations across LLMs of varying scales. These features capture a wide range of hallucination patterns, including overconfident numeric spans, incorrect temporal assertions, and ungrounded entity mentions. Notably, smaller LLMs tend to exhibit more generic hallucination signals (e.g., ``overstated concrete details''), whereas larger LLMs reveal more specific and nuanced patterns (e.g., ``precise numeric spans not grounded in retrieved passages''). This observation suggests that as LLMs increase in size, they develop more specialized internal features for identifying complex hallucination phenomena. This may contribute to the improved hallucination detection performance of larger models with RAGLens, as shown in Figure \ref{fig:self_comparison}.

\begin{table}[h!]
    \centering
    \caption{Explanation and examples of representative SAE features in Llama2-7B, Llama2-13B, Llama3.2-1B, and Llama3.1-8B that enable interpretable hallucination detection. Spans highlighted in \sethlcolor{lightred}\hl{red} indicate tokens where the feature is highly activated. The table illustrates how these features capture diverse hallucination patterns across models.}
    \label{tab:explanation}
    \begin{center}
    \begin{tabular}{cp{1.4in}p{1.4in}p{1.4in}}
    \toprule
        \textbf{Feature ID} & \textbf{Feature Explanation} & \textbf{Example Output} & \textbf{Example Explanation} \\
    \midrule
    \multicolumn{4}{c}{Llama2-7B}\\
    \midrule
        120059 & hallucination involving entity swaps and invented details. & [...] his castmates from ``Central \sethlcolor{lightred}\hl{Intelligence}'' took a knee as he  [...] & There is no mention of Kevin Hart's castmates from ``Central Intelligence''. \\
        127083 & unsupported concrete additions such as names, pairings, numbers, legal / evidentiary claims &  [...] such as a Walker-Rubio or \sethlcolor{lightred}\hl{Clinton-Kaine} pairing  [...] & Clinton-Kaine is not mentioned in the source content \\
    \midrule
    \multicolumn{4}{c}{Llama2-13B}\\
    \midrule
        26530 & ``amenity assertion'' detector that spikes on the outdoor seating phrase & The restaurant has a casual ambiance and offers \sethlcolor{lightred}\hl{outdoor} seating & Original text shows no outdoor seating \\
        71784 & hallucinations on day/time (hours) and ratings & [...] no information available for the business's hours on \sethlcolor{lightred}\hl{Mondays} [...] & Original text: closed on Monday \\
    \midrule
    \multicolumn{4}{c}{Llama3.2-1B}\\
    \midrule
        78162 & unsupported or swapped named entities and precise facts &  The team is now facing their in-state rivals, the \sethlcolor{lightred}\hl{Los Angeles Dodgers}  [...] & The Texas Rangers and Los Angeles Dodgers are not in-state rivals \\
        121247 & invented or overstated concrete details & [...] He was also a \sethlcolor{lightred}\hl{professor} of film criticism at NYU [...] & It is not mentioned in the original source. \\
    \midrule
    \multicolumn{4}{c}{Llama3.1-8B}\\
    \midrule
        37877 & precise numeric spans that aren't grounded in the retrieved passages & [...] soft ball stage occurs at a temperature of around \sethlcolor{lightred}\hl{245}-250°Fahrenheit [...] & The firm ball stage at a temperature of about 245 to 250 degrees Fahrenheit \\
        40779 & overconfident claims about hours, open/closed days, and amenities & [...] the exception of Friday when it closes at 20:00 \sethlcolor{lightred}\hl{PM} & Friday opens from 11am and closes by 10pm \\
    \bottomrule
    \end{tabular}
    \end{center}
\end{table}

\section{Further Analysis of Identified Features via Counterfactual Perturbation} \label{app:case_study}

To further validate that the features identified by RAGLens capture meaningful hallucination patterns in RAG-specific contexts, we conduct case studies using counterfactual perturbation on SAE feature 37877 from Llama3.1-8B, which detects ``precise numeric spans not grounded in retrieved passages'' (see Table \ref{tab:explanation}). We select representative samples from three RAGTruth subtasks, summarization, data-to-text, and question answering, where this feature is highly activated and the generation is hallucinated. For each, we manually edit the context to construct counterfactual scenarios: (1) the output becomes consistent with the perturbed context, or (2) the output remains inconsistent, but in a different way. For question answering, we also consider a version where the context is entirely removed to examine feature activation in the absence of grounding.

Tables \ref{tab:perturbation_case_summary}-\ref{tab:perturbation_case_data2txt} present the results. For summarization (Table \ref{tab:perturbation_case_summary}) and data-to-text (Table \ref{tab:perturbation_case_data2txt}), when the context is edited to make the output consistent, the feature value on previously highlighted tokens drops significantly; if the context remains inconsistent, the feature stays highly activated. In question answering (Table \ref{tab:perturbation_case_qa}), feature activation drops when the context is either edited to be consistent or removed, indicating that the feature is specialized for detecting ungrounded numeric spans in context, rather than general hallucination. Overall, these case studies demonstrate that RAGLens identifies features that robustly capture RAG-specific hallucination patterns.

\begin{table}[h!]
    \centering
    \caption{Counterfactual analysis of SAE feature 37877 from Llama3.1-8B on a RAGTruth-Summary example. The table illustrates how feature activation changes in response to context perturbations, highlighting its specificity for detecting ungrounded numeric spans in summarization. Key information in the original and updated contexts is highlighted in \hl{yellow}. For the output, tokens are highlighted in \sethlcolor{lightred}\hl{red}, with the intensity of the red background indicating the relative activation value on each token.}
    \label{tab:perturbation_case_summary}
    \begin{center}
    \begin{tabular}{p{5.3in}}
    \toprule
    \bf \makecell{Model Input}\\
    \midrule
Summarize the following news within 78 words:\\
Iraq Service Campaign Medal Reinstated
[...]
According to a memorandum signed by then-Under Secretary of Defense for Personnel and Readiness Gilbert Cisneros in August, the Pentagon once again awarded the Inherent Resolve Campaign Medal to eligible service members serving in Iraq. According to a document received by Military Times, \hl{the medal's eligibility has been renewed retroactive to January 1} and will continue through December 31, 2024. 
[...]\\
    \midrule
    \bf\makecell{Model Output (annotated with feature activation)} \\
    \midrule
    The US Department of Defense has reinstated the Inherent Resolve Campaign Medal for service members in Iraq due to renewed threats against US personnel in the Middle East. The medal's eligibility has been renewed retroactively from January 1, \spanp{5.224609375}{202}\spanp{60.546875}{1}, and will continue through December 31, 2024. The move was requested by the US Central Command, following Iraq's reclassification from a combat mission. The medal was initially created in 2016 to acknowledge service in Iraq or Syria, but was limited to Syria in 2017.\\
    \midrule
    \bf \makecell{Context Perturbation 1 (context inconsistent)} \\
    \midrule
    \textbf{Original Context}:\\
    \ [...] the medal's eligibility has been renewed retroactive to January 1 [...] \\\\
    \textbf{Updated Context}:\\
    \ [...] the medal's eligibility has been renewed retroactive to January 1, \hl{2023} [...] \\\\
    \textbf{Updated Activation}:\\
    \ [...] The medal's eligibility has been renewed retroactively from January 1, 202\spanp{90.234375}{1}, and will continue through December 31, 2024 [...]  \\
    \midrule
    \bf \makecell{Context Perturbation 2 (context consistent)} \\
    \midrule
    \textbf{Original Context}:\\
    \ [...] the medal's eligibility has been renewed retroactive to January 1 [...] \\\\
    \textbf{Updated Context}:\\
    \ [...] the medal's eligibility has been renewed retroactive to January 1, \hl{2021} [...] \\\\
    \textbf{Updated Activation}:\\
    \ [...] The medal's eligibility has been renewed retroactively from January 1, 2021, and will continue through December 31, 2024 [...]  \\    
    \bottomrule
    \end{tabular}
    \end{center}
\end{table}

\begin{table}[h!]
    \centering
    \caption{Counterfactual analysis of SAE feature 37877 from Llama3.1-8B on a RAGTruth-QA example. The table illustrates how feature activation changes in response to context removal and consistency edits, highlighting its sensitivity to ungrounded numeric spans in retrieval-augmented question answering. Key information in the original and updated contexts is highlighted in \hl{yellow}. For the output, tokens are highlighted in \sethlcolor{lightred}\hl{red}, with the intensity of the red background indicating the relative activation value on each token.}
    \label{tab:perturbation_case_qa}
    \begin{center}
    \begin{tabular}{p{5.3in}}
    \toprule
    \bf \makecell{Model Input}\\
    \midrule
Briefly answer the following question:\\
how to tell the temperature of water with a candy thermometer\\\\
Bear in mind that your response should be strictly based on the following three passages:\\
passage 1: [...] \hl{If the candy forms a soft, flexible ball it is at a temperature of 235-240 degrees Fahrenheit} or at the soft ball stage.\\
passage 2: [...] If it forms into a firm ball, \hl{it is in the firm ball stage. This stage is at a temperature of about 245 to 250 degrees Fahrenheit} [...]\\
passage 3: [...]\\\\

In case the passages do not contain the necessary information to answer the question, please reply with: "Unable to answer based on given passages."\\
    \midrule
    \bf\makecell{Model Output (annotated with feature activation)} \\
    \midrule
    Based on the provided passages, here is how to tell the temperature of water using a candy thermometer:\\
Passage 1 states that when the candy is in the thread stage, the temperature is around 230-235°Fahrenheit.\\
Passage 2 indicates that the soft ball stage occurs at a temperature of \spanp{92.96875}{245}\spanp{8.7890625}{-}\spanp{4.19921875}{250}°Fahrenheit for candies such as fudge and fondant.\\
Therefore, to determine the temperature of water using a candy thermometer, you can follow these steps:\\
1. Cool some of the candy in the thread stage by dropping it into cold water.\\
2. Observe if the candy forms a \tokp{6.494140625}{soft}, flexible ball. If it does, the temperature is \tokp{4.9560546875}{between} \spanp{61.328125}{245}\spanp{4.2236328125}{-}250°Fahrenheit, indicating that the \tokp{2.18505859375}{water} is at the soft ball stage.\\
If the \tokp{5.8837890625}{water} is not at the soft ball stage, continue cooling it until it reaches that temperature range, and then proceed to use the candy thermometer to measure its temperature.\\
    \midrule
    \bf \makecell{Context Perturbation 1 (context removed)} \\
    \midrule
    \textbf{Original Context}:\\
    \ [...] Bear in mind that [...] \\\\
    \textbf{Updated Context}:\\
    \hl{None} \\\\
    \textbf{Updated Activation}:\\
    \ [...] indicates that the soft ball stage occurs at a temperature of around 245-250°Fahrenheit [...]  \\
    \midrule
    \bf \makecell{Context Perturbation 2 (context consistent)} \\
    \midrule
    \textbf{Original Context}:\\
    \ [...] This stage is at a temperature of about 245 to 250 degrees Fahrenheit [...] \\\\
    \textbf{Updated Context}:\\
    \ [...] \hl{The soft ball stage} is at a temperature of about 245 to 250 degrees Fahrenheit [...] \\\\
    \textbf{Updated Activation}:\\
    \ [...] indicates that the soft ball stage occurs at a temperature of around 245-250°Fahrenheit [...]  \\    
    \bottomrule
    \end{tabular}
    \end{center}
\end{table}

\begin{table}[h!]
    \centering
    \caption{Counterfactual analysis of SAE feature 37877 from Llama3.1-8B on a RAGTruth-Data2txt example. The table demonstrates how feature activation responds to context perturbations, highlighting its specificity for identifying ungrounded numeric spans in data-to-text generation. Key information in the original and updated contexts is highlighted in \hl{yellow}. For the output, tokens are highlighted in \sethlcolor{lightred}\hl{red}, with the intensity of the red background indicating the relative activation value on each token.}
    \label{tab:perturbation_case_data2txt}
    \begin{center}
    \begin{tabular}{p{5.3in}}
    \toprule
    \bf \makecell{Model Input}\\
    \midrule
Write an objective overview about the following local business based only on the provided structured data in the JSON format. You should include details and cover the information mentioned in the customers' review. The overview should be 100 - 200 words. Don't make up information. Structured data:\\
\{`name': ``Mony's Mexican Food", `address': `217 E Anacapa St', `city': `Santa Barbara', `state': `CA', `categories': `Restaurants, Mexican', `hours': \{`Monday': `10:30-15:30', `Tuesday': `10:30-15:30', `Wednesday': `10:30-15:30', `Thursday': `10:30-15:30', `Friday': `10:30-15:30', \hl{`Saturday': `10:30-15:30'}\}, [...]\}\\
    \midrule
    \bf\makecell{Model Output (annotated with feature activation)} \\
    \midrule
Sure! Here's an objective overview of Mony's Mexican Food based on the provided structured data:\\\\
\ [...] The restaurant has a cozy and casual atmosphere, with no WiFi or outdoor seating available. However, customers can enjoy takeout or dine in during its hours of operation, which are from 10:30 AM to \spanp{12.5}{3}:30 PM from Monday to Friday, and 10:30 AM to \spanp{77.34375}{4}:\tokp{62.109375}{00} \tokp{7.71484375}{PM} on Saturday. Despite the limited hours and lack of WiFi, Mony's Mexican Food remains a popular choice among locals and visitors alike, with many return customers and high praise from reviewers.\\
    \midrule
    \bf \makecell{Context Perturbation 1 (context inconsistent)} \\
    \midrule
    \textbf{Original Context}:\\
    \ [...] `Saturday': `10:30-15:30'[...] \\\\
    \textbf{Updated Context}:\\
    \ [...] `Saturday': `\hl{9:30}-15:30' [...] \\\\
    \textbf{Updated Activation}:\\
    \ [...] and \spanp{55.859375}{10}:\spanp{16.796875}{30} AM to  \spanp{29.1015625}{4}:\tokp{24.12109375}{00} PM on Saturday [...]  \\
    \midrule
    \bf \makecell{Context Perturbation 2 (context consistent)} \\
    \midrule
    \textbf{Original Context}:\\
    \ [...] `Saturday': `10:30-15:30' [...] \\\\
    \textbf{Updated Context}:\\
    \ [...] `Saturday': `10:30-\hl{16:00}' [...] \\\\
    \textbf{Updated Activation}:\\
    \ [...] and 10:30 AM to  \spanp{13.4765625}{4}:00 PM on Saturday [...]  \\    
    \bottomrule
    \end{tabular}
    \end{center}
\end{table}

\section{Prompt Templates for LLM Calling} \label{app:prompt_templates}

This section presents the prompt templates we use for LLM calling in our experiments. Specifically, for the summarized SAE explanations in Table \ref{tab:interpret}, we use the template shown in Figure \ref{fig:prompt_summary}. The hallucination mitigation approaches discussed in Section \ref{sec:mitigation} are implemented with templates in Figures \ref{fig:prompt_mitigation_instance} and \ref{fig:prompt_mitigation_token} for the instance- and token-level feedback, respectively. The LLM evaluation shown in \ref{tab:mitigation} is implemented with the template in Figure \ref{fig:prompt_judgement}, following \citet{luo2023chatgpt}.

\begin{figure}[h!]
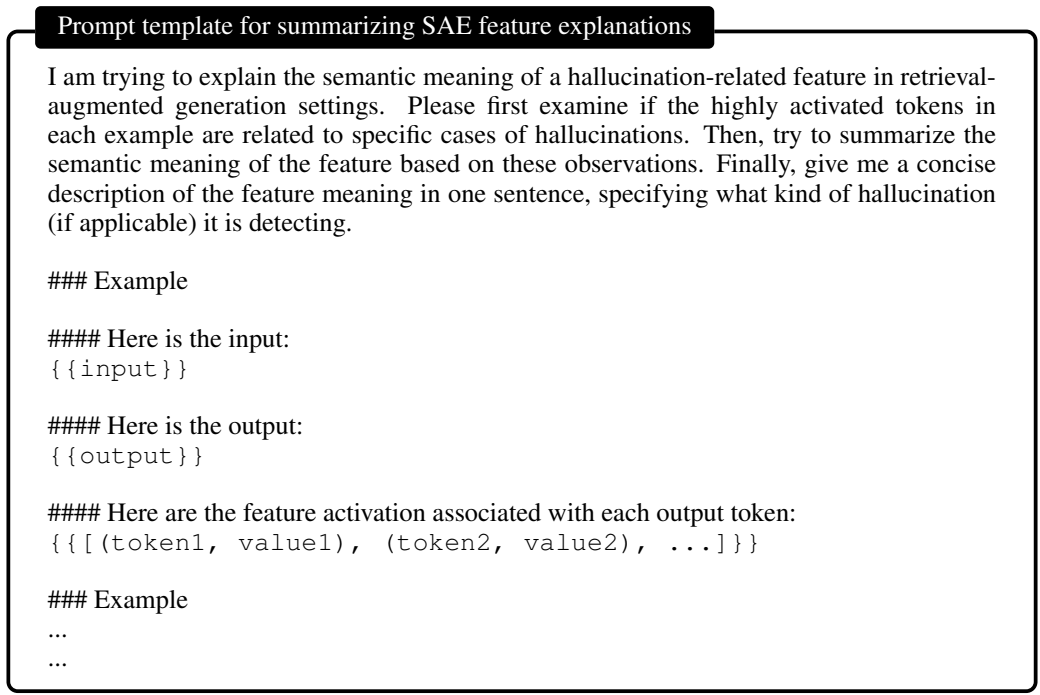

\begin{center}
\begin{AIbox}{Prompt template for summarizing SAE feature explanations}
I am trying to explain the semantic meaning of a hallucination-related feature in retrieval-augmented generation settings. Please first examine if the highly activated tokens in each example are related to specific cases of hallucinations. Then, try to summarize the semantic meaning of the feature based on these observations. Finally, give me a concise description of the feature meaning in one sentence, specifying what kind of hallucination (if applicable) it is detecting.\\

\#\#\# Example \\

\#\#\#\# Here is the input:\\
\verb|{{input}}|\\

\#\#\#\# Here is the output:\\
\verb|{{output}}|\\

\#\#\#\# Here are the feature activation associated with each output token:\\
\verb|{{[(token1, value1), (token2, value2), ...]}}|\\

\#\#\# Example \\
...\\
...

\end{AIbox}
\end{center}
\caption{Prompt template for summarizing SAE feature explanations.}
\label{fig:prompt_summary}
\end{figure}

\begin{figure}[h!]
\begin{center}
\begin{AIbox}{Prompt template for hallucination mitigation with instance-level feedback}
User:\\
\verb|{{input}}|\\

Assistant:\\
\verb|{{original_output}}|\\

User:\\
There are hallucinations in your output. Please revise it.
\end{AIbox}
\end{center}
\caption{Prompt template for hallucination mitigation with instance-level feedback.}
\label{fig:prompt_mitigation_instance}
\end{figure}

\begin{figure}[h!]
\begin{center}
\begin{AIbox}{Prompt template for hallucination mitigation with token-level feedback}
User:\\
\verb|{{input}}|\\

Assistant:\\
\verb|{{original_output}}|\\

User:\\
There are hallucinations in your output, especially on the following spans:\\
\verb|{{[span1, span2, ...]}}|\\

Please revise it.
\end{AIbox}
\end{center}
\caption{Prompt template for hallucination mitigation with token-level feedback.}
\label{fig:prompt_mitigation_token}
\end{figure}

\begin{figure}[h!]
\begin{center}
\begin{AIbox}{Prompt template for LLM-as-a-Judge on mitigation results}
Decide if the following summary/answer is consistent with the corresponding article. Note that consistency means all information in the output is supported by the article.\\

Article: \verb|{{context}}|\\

Summary/Answer: \verb|{{revised_output}}|\\

Explain your reasoning step by step then answer (yes or no) the question:
\end{AIbox}
\end{center}
\caption{Prompt template for LLM-as-a-Judge on mitigation results.}
\label{fig:prompt_judgement}
\end{figure}

\section{Use of Large Language Models}

In this project, large language models (LLMs) are used for multiple purposes:
\begin{itemize}
    \item We use checkpoints of open-source LLMs to extract hidden states, which are then analyzed to identify interpretable SAE features that help detect RAG hallucinations.
    \item We use LLMs as judges to evaluate whether RAGLens feedback helps mitigate hallucinations in the original model outputs.
    \item We use LLMs as summarizers to describe the semantic roles of different SAE features based on their activations across multiple samples.
    \item We use LLMs to proofread the paper.
\end{itemize}

\end{document}